\def\alert#1{\textcolor{red}{#1}}
\def\note#1{\textcolor{blue}{#1}}
\newcommand{\SIcommas}[2]{\SI{#1}{#2}} 
\DeclareSIUnit\flop{flop}
\DeclareSIUnit\Flop{F}
\DeclareSIUnit\Byte{B}
\newcommand{\mins}[1]{\SIcommas{#1}{\minute}}
\newcommand{\GB}[1]{\SI{#1}{\giga\Byte}}
\newcommand{\numcores}[1]{\SIcommas{#1}{cores}}
\newcommand{\Percent}[1]{\SI{#1}{\percent}}
  \providecommand\BibTeX{{%
    \normalfont B\kern-0.5em{\scshape i\kern-0.25em b}\kern-0.8em\TeX}}}
\begin{document}

\newcommand{\zd}[1]{{\color{blue}{\it DZ: #1}}}
\newcommand{\cy}[1]{{\color{purple}{\it CY: #1}}}
\newcommand{\israt}[1]{{\color{magenta}{\it IN: #1}}}
\newcommand{\rich}[1]{{\color{red}{\bfseries RV: #1}}}
\newcommand{\sys}{XXX\xspace}
\newcommand{\bcomment}[1]{}
\newtheorem{claim}[theorem]{Claim}

\title{Nimble GNN Embedding with Tensor-Train Decomposition}
\author{Chunxing Yin}
\affiliation{%
  \institution{Georgia Institute of Technology}
  \country{USA}
}
\email{cyin9@gatech.edu}
\authornote{The majority of this work was done when the author was an intern at Amazon.}

\author{Da Zheng}
\affiliation{%
  \institution{Amazon}
  \country{USA}
}
\email{dzzhen@amazon.com}

\author{Israt Nisa}
\affiliation{%
  \institution{Amazon}
  \country{USA}
}
\email{nisisrat@amazon.com}

\author{Christos Faloutsos}
\affiliation{%
  \institution{Amazon}
  \country{USA}
}
\email{faloutso@amazon.com}

\author{George Karypis}
\affiliation{%
  \institution{Amazon}
  \country{USA}
}
\email{gkarypis@amazon.com}

\author{Richard Vuduc}
\affiliation{%
  \institution{Georgia Institute of Technology}
  \country{USA}
}
\email{richie@cc.gatech.edu}

\begin{abstract}
This paper describes a new method for representing embedding tables of graph neural networks (GNNs) more compactly via tensor-train (TT) decomposition.
We consider the scenario where
(a) the graph data that \emph{lack} node features, thereby requiring the learning of embeddings during training;
and
(b) we wish to exploit GPU platforms, where smaller tables are needed to reduce host-to-GPU communication even for large-memory GPUs.
The use of TT enables a compact parameterization of the embedding, rendering it small enough to fit entirely on modern GPUs even for massive graphs.
When combined with judicious schemes for initialization and hierarchical graph partitioning, this approach can reduce the size of node embedding vectors by \num{1659}$\times$ to \num{81 362}$\times$ on large publicly available benchmark datasets, achieving comparable or better accuracy and significant speedups on multi-GPU systems.
In some cases, our model without explicit node features on input can even match the accuracy of models that use node features.
\end{abstract}

\ccsdesc[500]{Computing methodologies~Machine learning}
\keywords{graph neural networks, tensor-train decomposition, embedding}

\maketitle

\section{Introduction}

\begin{figure}
    \centering
    \includegraphics[width=\columnwidth]{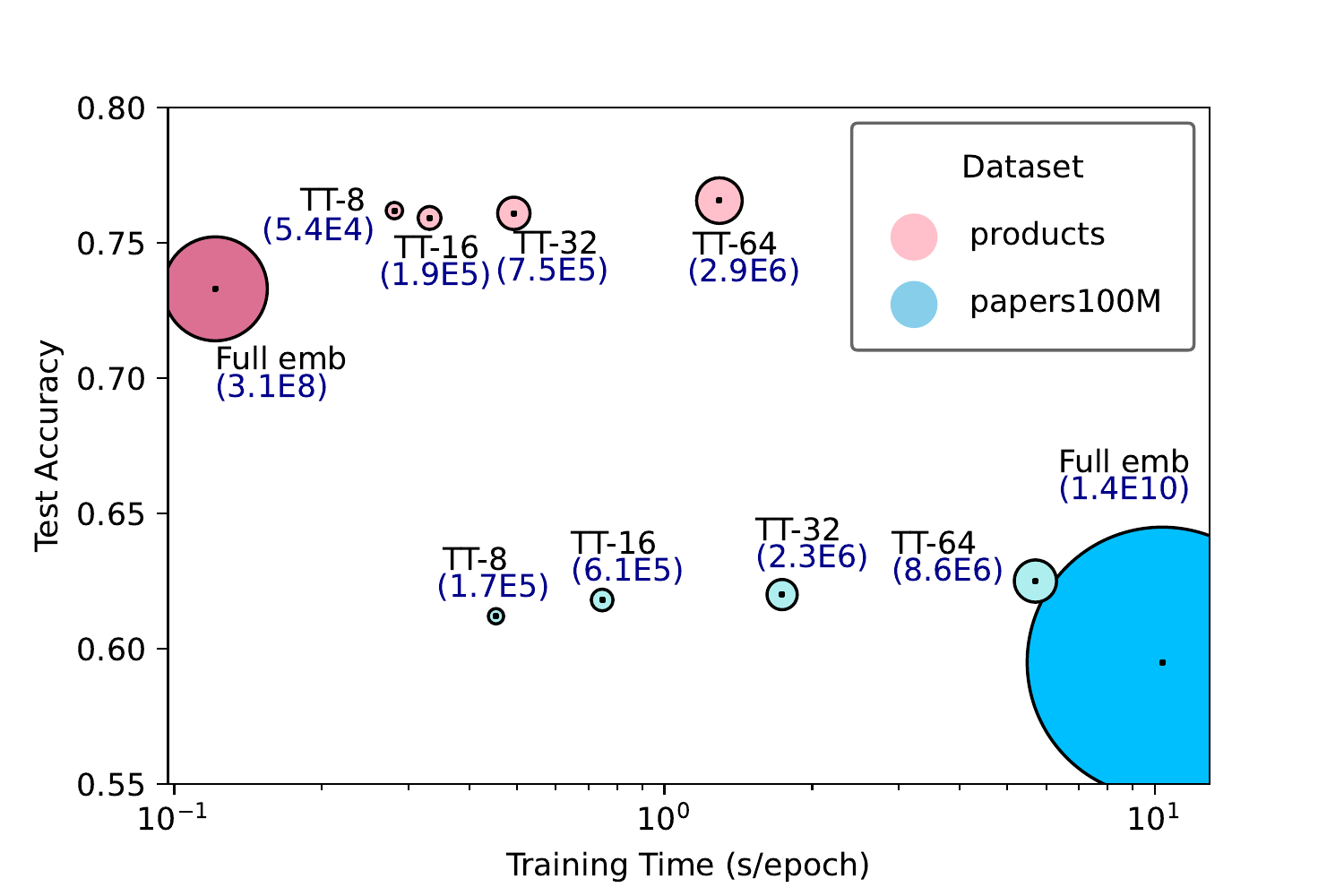}
    \caption{The trade-offs among memory consumption, training time, and accuracy of two OGB node property prediction tasks. The marker area is proportional to the model size, along with the model configuration and number of parameters labeled next to the data points. TT embedding significantly compresses the node embedding in both datasets without sacrificing the model accuracy. Our method accelerates training for large graphs by up to 20$\times$, but introduces overheads to smaller graphs.}
    \label{fig:intro}
\end{figure}

This paper is motivated by a need to improve the scalability of graph neural networks (GNNs), which has become an important part of the modern graph modeling toolkit in an era when graphs with many billions of nodes and edges are commonplace.
For instance, Facebook has billions of users; Amazon has billions of users and billions of items;
the knowledge graph Freebase~\cite{bollacker2008freebase} has 1.9 billion triples and thousands of relations; and Google's Knowledge Graph has over 500 billion facts spanning 5 billion entities.\footnote{\url{https://blog.google/products/search/about-knowledge-graph-and-knowledge-panels/}}

For GNNs, a significant scaling challenge occurs when the graph does not come with explicit node features.
In this case, one typically associates a trainable embedding vector with each node.
The GNN learns these embeddings via message passing, and these trainable node embeddings become part of the overall model's parameterization, with the parameters updated via back propagation.
The problem is the memory requirement for these embeddings, which strains modern GPU-accelerated systems.
Because the embedding storage significantly exceeds the capacity of even large-memory GPUs, a typical strategy is to keep the initial node embeddings on the host CPU and perform mini-batch computations on the GPU~\cite{zheng2020distdgl}.
But in this scenario, severe performance bottlenecks arise due to the cost of doing node embedding updates on the CPU as well as host-to-GPU communication.
%

There are several methods to accelerate training for models with a large number of trainable embeddings~\cite{weinberger2009feature, serra2017getting,
tito2017hash, kang2020deep} (\cref{sec:related}).
Many use hashing and have been applied mainly in recommendation models.
Recently, a position-based hash method has been developed to compress the trainable node embeddings for GNNs~\cite{kalantzi2021position}.
However, even this method's compression factor of 10$\times$ does not overcome the memory capacity limits of a GPU.

A recent alternative to reduce embedding storage is the tensor-train (TT) decomposition, which was shown to be effective for recommendation models~\cite{yin2021tt}.
One can think of TT as a compact parameterization of an embedding table that can be much smaller than na\"ively storing the embedding vectors.
Therefore, it is natural to consider this method for GNNs, as we do in this paper.
However, adopting it is nontrivial:
\emph{a priori}, it is unknown how TT will affect accuracy and speed---which TT can actually \emph{worsen}, even if only slightly---as it reduces parameter storage~\cite{yin2021tt}.

For GNNs, we can overcome these risks through additional innovations (\cref{sec:method}).
One innovation is in the TT-parameter initialization scheme.
Tensorization implicitly increases the depth of the network and results in slower convergence.
To address this issue, we consider orthogonal random initialization, which ensures that columns of the initial embedding constructed by the tensor-train method are orthogonal.
The other innovation, also motivated directly by the existence of graph structure, is to reorder the nodes in the graph based on a hierarchical graph partition algorithm, namely, METIS~\cite{karypis1997metis}.
This reordering heuristically groups nodes that are topologically close to each other, which leads to a greater ability of homophilic nodes to share parameters in the TT representation.

Combining TT with the novel initialization techniques and node reordering based on graph partition can actually \emph{improve} model accuracy on standard OGB benchmarks~\cite{hu2020ogb} (\cref{sec:results}).
We achieved 1\% higher accuracy for the \textsf{ogbn-arxiv} graph, and 3\% for \textsf{ogbn-products} using fewer parameters of \num{415}$\times$ than full embedding table, and 12.2$\times$ than the state-of-the-art position-based embedding~\cite{kalantzi2021position}.
Moreover, our method can train a GNN model on \textsf{ogbn-products} \emph{without} using explicit \textit{a priori} node features to achieve an accuracy comparable to a baseline GNN model that is given such node features.
Meanwhile, because TT can compress the node embeddings by a factor of 424$\times$ to 81362$\times$ on \textsf{ogbn-papers100M}, as we also show, we can fit very large models \emph{entirely on the GPU}.
This capability speeds up training by a factor of 23.9$\times$ and 19.3$\times$ for \textsf{ogbn-papers100M}, which has 111 million nodes, using Graphsage and GAT, respectively.

%
%

%
The essential highlights and tradeoffs of using our TT-based representation versus a full embedding are summarized in \cref{fig:intro}, which compares accuracy, training time, and the size of the embedding table parameterization for two of the datasets.
    The TT method, which is associated with the so-called TT-rank hyperparameter, produces significantly smaller representations in both datasets, while maintaining comparable accuracy.
    However, training time can speed up significantly (\textsf{ogbn-papers100}) or slow down (\textsf{ogbn-products}).
    These observations both indicate the strong potential of the TT method and raise several questions for future work about how to characterize its efficacy (\cref{sec:conclusion}).
%

\section{Related Work}
\label{sec:related}

Among compression techniques for reducing training and inference costs, magnitude pruning~\cite{zhu2017prune}, variational dropout~\cite{molchanov2016pruning}, and $l_0$ regularization~\cite{louizos2017learning} are the best known.
However, these have not been proven to be effective solutions for embedding layer compression.
Thus, we focus our summary on techniques most relevant to embedding-layer compression and tensor decomposition.

\textbf{Embedding Table Compression Techniques:}
Kalantzi et al. explores constructing the node embeddings in GNNs through a composition of a positional-based shared vector and a node specific vector, each from a small pool of embeddings~\cite{kalantzi2021position}.
This work improves accuracy while reducing memory consumption by up to 34$\times$.
However, this level of compression is not enough for the embeddings to fit on a GPU. 

An alternative is the use of hashing-based methods, especially for personalization and recommendation.
Feature hashing uses a single hash function to map multiple elements to the same embedding vector~\cite{weinberger2009feature}.
However, hash collisions yield intolerable accuracy degradation~\cite{zhao2020distributed}.
The accuracy loss and limited compression ratios appear to be the general issue with unstructured hashing-based methods as well~\cite{shi2020compositional}.
The application of algebraic methods, such as low-rank approximation on the embedding tables, also suffer unacceptable accuracy losses~\cite{hrinchuk2019tensorized, ghaemmaghami2020training}. 

\textbf{Tensorization:}
\label{sec:related:tensors}
Tensor methods, including Tucker decomposition~\cite{cohen2016expressive}, tensor ring (TR) decomposition~\cite{wang2018wide}, and tensor train (TT) decomposition~\cite{garipov2016ultimate}, have been extensively studied for compressing fully-connected and convolutional layers. 
In particular, TT and TR offer a structural way approximate full tensor weights and thus are shown to be effective in preserving the DNN weights and accuracy.
Tensorized embeddings can produce unique and compact representations, potentially avoiding loss of information caused by mapping uncorrelated items to a same embedding as occurs with hashing.
%
Compressing large embedding layers through TT decomposition has been shown to be effective for recommendation systems~\cite{yin2021tt}, where the model can be more than 100$\times$ smaller with marginal losses of accuracy.
However, this work has not explored the explainability of tensorized networks and can also incur accuracy losses.

\begin{figure*}[t]
    \centering
    \includegraphics[width=.7\textwidth]{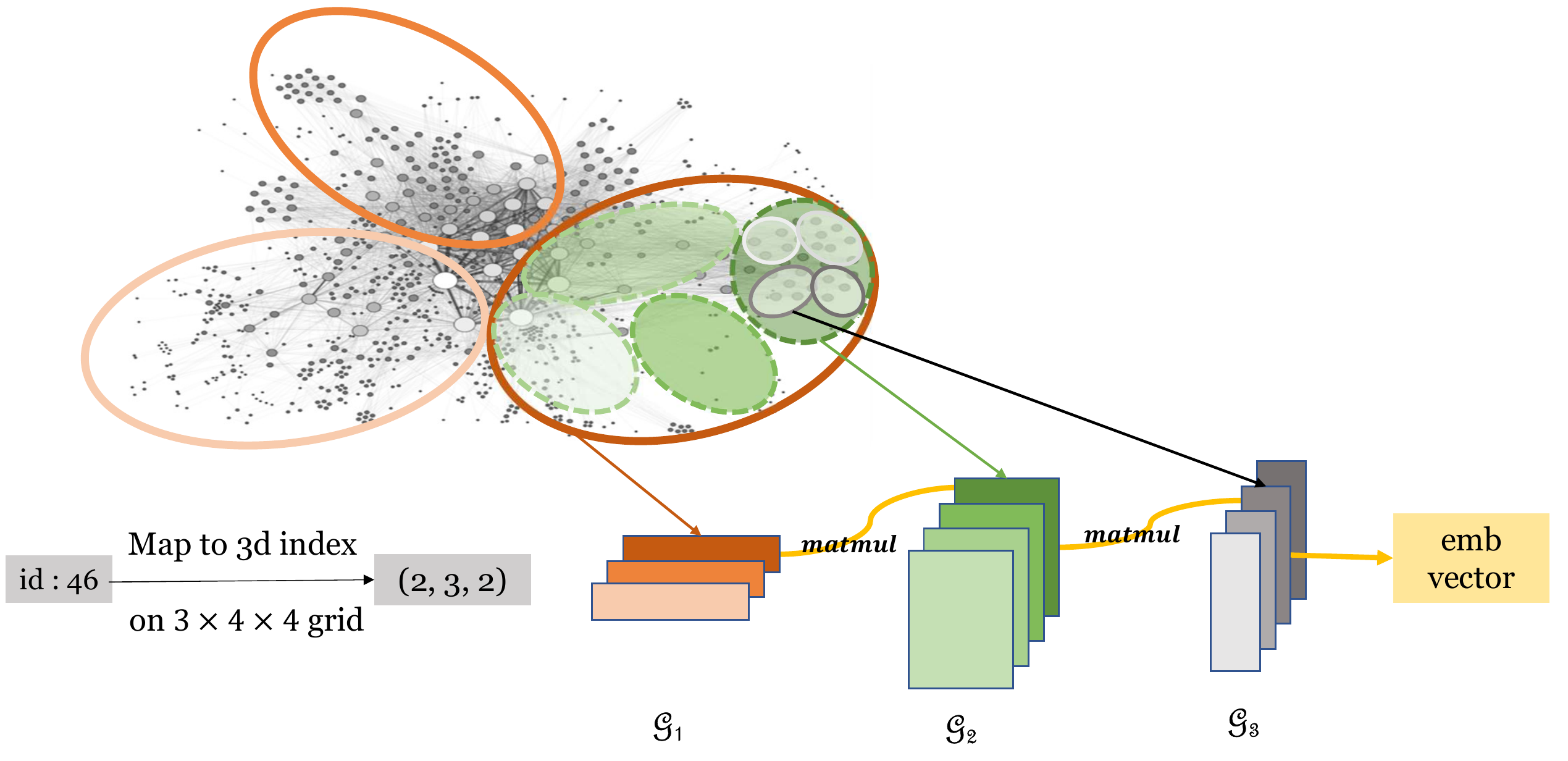}
    \caption{Overview of TT embedding vector construction in GNNs. Embedding vectors are constructed via shared parameters in the corresponding TT cores if the nodes belong to the same cluster in the hierarchical partition.}
    \label{fig:overview}
\end{figure*}





\section{Background}
\paragraph{GNNs and trainable embeddings}
Graph Neural Networks (GNNs) are deep learning models capable of incorporating topological graph structures and learning node/edge features.
They are often modeled using a message-passing framework, where each node aggregates the messages received from its neighboring nodes.
Given a graph $G(V, E)$ with vertex set $V$ and edges $E$, we denote the \textbf{node embedding} of $v \in V$ at layer $l$ as $h_v^{(l)}$ and the neighborhood of $v$ as $\mathcal{N}(v)$.
A GNN model updates the node embedding via
\begin{align}
    h_v^{(l+1)} = g\!\left(h_v^{(l)}, \sum_{u \in \mathcal{N}(v)} f\!\left(h_v^{(l)}, h_u^{(l)}, e_{uv}\right)\right),
\label{eqn:msg-pass}
\end{align}
where $e_{uv}$ is the edge feature associated with the edge $(u, v) \in E$, and $f(\cdot)$ and $g(\cdot)$ are learnable functions that calculate and update the messages, respectively. 

In Equation (\ref{eqn:msg-pass}), $h_v^{(0)}$ represents the input feature of vertex $v$.
If the input features are unavailable or of low-quality, it is common to place a trainable node embedding module in the GNN and use a one-hot encoding for input features.
Let $\mathbf{W} \in \mathbb{R}^{M \times N}$ be the embedding matrix, where $M$ equals the number of vertices and $N$ is the embedding dimension.
The embedding vector $\mathbf{w}_i$ of node $i$ is
\begin{align}
    \mathbf{w}_i^T = \mathbf{e}_i^T \mathbf{W},
\label{eqn:emb-vec}
\end{align}
where $\mathbf{e}_i$ is the one-hot encoding with $i-$th position to be 1, and 0 everywhere else.

\paragraph{Tensor-train decompositions}
Let $\mathcal{A}$ be a $d$-dimensional tensor of shape $N_1 \times N_2 \times \dots N_d$.
Its \textbf{tensor train (TT)} is the factorization,
\begin{equation}
    \mathcal{A}(i_1, i_2, \dots, i_d) = \mathcal{G}_1(:, i_1, :)\mathcal{G}_2(:, i_2, :)\dots \mathcal{G}_d(:, i_d, :),
\end{equation}
where each three-dimensional tensor $\mathcal{G}_k$ is a \textbf{TT core}, of shape $R_{k-1}\times N_k \times R_k$.
The values $\{{R_k}\}_{k=0}^d$ are the \textbf{TT ranks}, where $R_0 = R_d = 1$ so that the first and last operands are row and column vectors, respectively.

A TT decomposition can be applied to a matrix, referred to as \textbf{TT matrix format (TTM)}.
We formalize it as follows.
Let $\mathbf{W} \in \mathbb{R}^{M \times N}$ be a matrix and factor each of $M$ and $N$ into any product of integers,
\begin{equation*}
    M = \prod_{k = 1}^d m_k \quad\text{and}\quad N = \prod_{k = 1}^d n_k.
\end{equation*}
Then reshape $\mathbf{W}$ into a $2d$-dimensional tensor of shape $m_1 \times m_2 \times \cdots \times m_d \times n_1 \times n_2 \times \cdots \times n_d$ and shuffle the axes of the tensor to form $\mathcal{W} \in \mathbb{R}^{(m_1\times n_1)\times(m_2\times n_2)\times \dots \times (m_d\times n_d)}$, where
\begin{align}
\begin{split}
    \mathcal{W}&((i_1, j_1),(i_2, j_2), \dots, (i_d, j_d)) \\
    &= \mathcal{G}_1(:, i_1, j_1, :) \mathcal{G}_2(:, i_2, j_2, :) \dots \mathcal{G}_d(:, i_d, j_d, :) 
\end{split}
\label{eqn:tt-matrix}
\end{align}
and each 4-d tensor $\mathcal{G}_k \in \mathbb{R}^{R_{k-1}\times m_k \times n_k \times R_k}$ with $R_0 = R_d = 1$.
Although $\mathcal{G}_1$ and $\mathcal{G}_d$ are 3-d tensors,we will treat them as 4-d tensors for notional consistency, e.g., $\mathcal{G}_1 (0, i_1, j_1, r)$.
\Cref{eqn:tt-matrix} is our compressed representation of the embedding table $\textbf{W}$, which we refer to as \textbf{TT-emb}.

The choice of factorizations of $M$ and $N$ is arbitrary, and tensor reshaping can be performed in the same way with zero padding.
To minimize the number of parameters of the TT-cores, we factor $M$ and $N$ using $m_k = \lceil M^{1/d} \rceil$, and $n_k = \lceil N^{1/d} \rceil$.
TTM format reduces the number of parameters for storing a matrix from $O(MN)$ to $O(dR^2 (MN)^{1/d})$. 
\Cref{tab:ogb-dim-tt} in \cref{sec:results-compress} documents the exact embedding table sizes and our parameter choices.

Given the TT-cores $\{\mathcal{G}_k: \mathcal{G}_k \in \mathbb{R}^{R_{k-1}\times m_k \times n_k \times R_k}\}$, we construct an embedding vector for vertex $i$, which is the $i-$th row of $\textbf{W}$, via
\begin{equation}
    \textbf{w}_i(j) = W(i, j) = \mathcal{G}_1(:, i_1, j_1, :) \mathcal{G}_2(:, i_2, j_2, :) \dots \mathcal{G}_d(:, i_d, j_d, :)
    \label{eqn:tt-embedding}
\end{equation}
where $i = \sum_{p=1}^d i_p \prod_{q = p+1}^d m_q$, and $j = \sum_{p=1}^d j_p \prod_{q = p+1}^d n_q$ for $j = 1, 2, \dots, N$. 
An entire row of $\textbf{W}$ can be reconstructed all at once (rather than elementwise) using $d-1$ matrix multiplications~\cite{yin2021tt}.
%
%
Unfold each 3-d subtensor of $\mathcal{G}_k(:, i_k, :, :)$ into a matrix of shape $R_{k-1} \times n_k R_k$.
Let $\textbf{w}_i^{(k)} \in \mathbf{R}^{\prod_1^{k-1} n_i \times n_kR_k}$ be the partial product of the first $k$ unfolding matrices.
We reshape $\textbf{w}_i^{(k)}$ to be of shape $\prod_1^{k} n_i \times R_k$ and proceed with this chain of matrix multiplications, which in the end calculates the entire row $w_i^{(d)} \in \mathbf{R}^{N \times 1}$.

\begin{table}
\caption{TT-emb related Notations.}
\label{tab:notation}
\begin{tabular}{ll}
\toprule
Notation                                 & Description  \\ 
\hline \rule{0pt}{2.6ex}
$\mathbf{W} \in \mathbb{R}^{M \times N}$          & $M \times N$ trainable embedding table \\
$\mathcal{W}$                             & $\mathbf{W}$ reshaped as a $2d$-way tensor \\
$m_i, n_i$                               & the $i$-th factor of $M$ and $N$, resp. \\
$d$                                      & number of TT cores \\
$R_i \in \mathbb{R}$                       & predefined TT ranks \\
$\mathcal{G}_i \in \mathbb{R}^{R_{i-1} \times M_i \times N_i \times R_k}$ & TT-core for matrix decomposition \\
\bottomrule
\end{tabular}
\end{table}

This pipeline for embedding lookup with TTM is illustrated in \cref{fig:overview}.
As an example, assume the graph has $N = 48$ nodes, and the node with id 46 is queried. First, we map the node id to a 3-d coordinate $(i_1, i_2, i_3) \in \mathbb{Z}_+^{ m_1 \times m_2 \times m_3}$, namely $(2, 3, 2)$ in this case. To reconstruct the embedding vector for node $46$, we multiply the $i_k^{th}$ slice from TT-core $\mathcal{G}_i$.


\section{Methods}
\label{sec:method}


Our method is motivated by two principles.
First, we wish to learn a \emph{unique} embedding vector for each node.
Doing so intends to reduce information loss or accuracy degradation that might arise when mapping distinct nodes into the same vector, as with hashing (\cref{sec:related}).
Second, we seek node embeddings that reflect \emph{graph homophily}~\cite{mcpherson2001homophily}.
That is the property where topologically close nodes tend to share similar representations and labels, which might result in similar node embeddings.

The method itself has two major components.
We apply TT decomposition to reduce the embedding table size in GNNs and leverage graph homophily by aligning the graph's topology with the hierarchy of the tensor decomposition.
To promote convergence, we propose orthogonal weight initialization algorithms for TT-emb in GNNs (\cref{sec:method-ortho}).
We then incorporate the graph topological information into tensor decomposition by preprocessing the graph through recursive partitioning and rearranging nodes by clusters, which would enforce more similar embedding construction among neighboring nodes.
Our use of graph partitioning tries to maximize TT-core weight sharing in the embedding construction (\cref{sec:method-metis}).

\subsection{Orthogonal Initialization of TT-cores}
\label{sec:method-ortho}

Previous studies have suggested thats, comparing to Gaussian initialization, initializing network weights from the orthogonal group can accelerate convergence and overcome the dependence between network width and the width needed for efficient convergence to a global minimum~\cite{hu2020provable}.
Therefore, our algorithm for TT-core initialization ensures that the product embedding matrix $\mathbf{W}$ is orthogonal.

For simplicity, we describe and analyze the algorithm assuming a 3-d TT decomposition, which can be easily extended to higher dimensions.
Let $\textbf{W} \in \mathbb{R}^{M \times N}$ be a matrix which we decompose as
\begin{align*}
    \mathcal{W}((i_1, j_1&), (i_2, j_2), (i_3, j_3)) 
    \\&= \mathcal{G}_1(:, i_1, j_1,:)\mathcal{G}_2(:, i_2, j_2, :) \mathcal{G}_3 (:, i_3, j_3,:),
\end{align*}
where $\mathcal{G}_1\in \mathbb{R}^{1 \times m_1 \times n_1 \times R}$, $\mathcal{G}_2\in \mathbb{R}^{R \times m_2 \times n_2 \times R}$, and $\mathcal{G}_3\in \mathbb{R}^{R \times m_3 \times n_3 \times 1}$. 

One intuitive approach is to decompose a random orthogonal matrix $\mathbf{W} \in \mathbb{R}^{M \times N}$. 
\Cref{alg:ttm} describes the process of decomposing a matrix into its TTM format, adapted from the TT-SVD decomposition algorithm~\cite{oseledets2011tensor}.
This method ensures the orthogonality of the constructed embedding matrix but not of each single TT-core.

\begin{algorithm}
    \caption{TTM Decomposition} 
    \label{alg:ttm}
    \KwIn{Embedding matrix $\textbf{X} \in \mathbb{R}^{M \times N}$ consists of normalized eigenvectors of $L$}
    \KwOut{TT-cores $\mathcal{G}_1, \dots, \mathcal{G}_d$ of $X$ as defined in Eqn~\ref{eqn:tt-matrix}}
    $\mathcal{X}$ = reshape($X$, [$m_1, m_2 \dots m_d, n_1, n_2, \dots, n_d$])\;
    $Y$ = transpose($\mathcal{X}$, [$m_1, n_1, m_2, n_2 \dots, m_d, n_d$])\;
    \For{k = 1 to d-1}{
        $Y$ = reshape($Y$, $[R_k * m_k * N_k, :]$)\;
        Compute truncated SVD $Y = U \Sigma V^T + E$, $rank(U) \le R_k$\;
        $\mathcal{G}_k$ = reshape($U$, $[R_k, m_k, n_k, R_{k+1}]$)\;
        $Y$ = $\Sigma V^T$\;
    }
    $\mathcal{G}_d$ = reshape($Y$, [$R_d, m_d, n_d, 1$])\;
\end{algorithm}
However, constructing a large orthogonal matrix and its TTM decomposition can be slow. We show an efficient algorithm to initialize unfolding matrices of each $\mathcal{G}_i$ from small orthogonal matrices, in a way that ensures the orthogonality of the product embedding matrix $\mathbf{W}$.
Formally, we want to initialize the tensor cores so that $\textbf{W}$ is uniformly random from the space of scaled semi-orthogonal matrix, i.e., those having $\textbf{W}^T \textbf{W} = \alpha \textbf{I}$.
Our algorithm iteratively initializes each TT-core as shown in \cref{alg:ortho}.

\bcomment{
\begin{algorithm}
    \caption{Orthogonal Initialization of TT-cores.}
    \label{alg:ortho}
    Generate $Rn_3$ orthogonal vectors of length $m_3$ $\{v_1, v_2, \dots, v_{Rn_3}\}$\;
    Set $\mathcal{G}_3(r, :, j, 0)= v_{rn_3+j}$\;
    Generate $Rn_2$ orthogonal vectors of length $Rm_2$ $\{w_1, w_2, \dots, w_{Rn_2}\}$\;
    Set $\mathcal{G}_2(r, : , j, :) = w_{rn_2+ j}.reshape(m_2, R)$\;
    Generate $n_1$ orthogonal vectors of length $Rm_1$ $\{u_1, u_2, \dots, u_{n_1}\}$\;
    Set $\mathcal{G}_1(0, : , j, :) = u_{j}.reshape(m_1, R)$\;
\end{algorithm}
}
\begin{algorithm}
    \caption{Orthogonal Initialization of TT-cores.}
    \label{alg:ortho}
    \KwIn{Matrix shapes $M, N$ factorized as $M = \prod^d_1 m_i$, $N = \prod_1^d n_i$, and TT-ranks $\{R_0, R_1, \dots, R_d\}$}
    \KwOut{TT-cores $\{\mathcal{G}_i\}_{i=1}^d$ s.t. the unfolding matrix of $\mathcal{G}_i$ and product $W \in \mathbb{R}^{M \times N}$ are semi-orthogonal}
    \For{i = 1 to d}{
        Generate $n_i R_{i-1}$ orthogonal vectors $\{v_1, v_2, \dots v_{n_i R_{i-1}}\}$, where $v_j \in \mathbb{R}^{m_i R_{i}}$ \;
        \For{r = 1 to $R_{i-1}$}{
            \For{j = 1 to $n_i$}{
                $\mathcal{G}_i(r, :, j, :) = $ reshape$(v_{n_i r + j}, [m_i, R_{i}])$\;
            }
        }
    }
\end{algorithm}

\begin{claim}
The embedding matrix $W$ initialized by Algorithm~\ref{alg:ortho} is column orthogonal. 
\label{claim:ortho}
\end{claim}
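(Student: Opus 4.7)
The plan is to prove $W^T W = I_N$ by reverse induction over the TT-cores, collapsing the Gram product one core at a time from right to left. For each $k \in \{1,\dots,d\}$ I would define the right partial contraction
\[
T_k(s, s'; \vec{j}_{\ge k}, \vec{j}\,'_{\ge k}) = \sum_{i_k,\dots,i_d}\ \sum_{\substack{s_k,\dots,s_{d-1} \\ s_k',\dots,s_{d-1}'}} \prod_{l=k}^{d} \mathcal{G}_l(s_{l-1}, i_l, j_l, s_l)\, \mathcal{G}_l(s_{l-1}', i_l, j_l', s_l'),
\]
with $s_{k-1}=s$, $s_{k-1}'=s'$, and $s_d = s_d' = 0$. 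The inductive invariant to establish is $T_k(s,s'; \cdot) = \delta_{ss'}\prod_{l \ge k}\delta_{j_l j_l'}$; specializing to $k = 1$ and $s = s' = 0$ yields $(W^T W)(j, j') = \delta_{j j'}$, which is exactly column orthogonality.

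The workhorse identity powering every step is
\[
\sum_{i_k,\,s_k} \mathcal{G}_k(s_{k-1}, i_k, j_k, s_k)\, \mathcal{G}_k(s_{k-1}', i_k, j_k', s_k) = \delta_{s_{k-1} s_{k-1}'}\,\delta_{j_k j_k'},
\]
which I would justify by observing that its left-hand side is the Frobenius inner product of the two $m_k \times R_k$ slices $\mathcal{G}_k(s_{k-1}, :, j_k, :)$ and $\mathcal{G}_k(s_{k-1}', :, j_k', :)$. By construction in Algorithm~\ref{alg:ortho}, each such slice is a reshape of one of the orthonormal vectors $v_{n_k s + j} \in \mathbb{R}^{m_k R_k}$, and Frobenius inner products of reshapes coincide with the Euclidean inner products of the originals.

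The induction itself is then largely automatic. The base case $k = d$ is immediate because $R_d = 1$ leaves no $s_d, s_d'$ to sum, so $T_d$ reduces directly to the workhorse identity applied with $j_d, j_d'$. For the inductive step I would expand $T_k = \sum_{i_k,s_k,s_k'} \mathcal{G}_k(s_{k-1}, i_k, j_k, s_k)\,\mathcal{G}_k(s_{k-1}', i_k, j_k', s_k')\,T_{k+1}(s_k, s_k'; \cdot)$; the hypothesis $T_{k+1}(s_k, s_k'; \cdot) = \delta_{s_k s_k'}\prod_{l>k}\delta_{j_l j_l'}$ collapses $s_k' = s_k$, after which what remains is precisely the workhorse identity, contributing the new factors $\delta_{s_{k-1} s_{k-1}'}\delta_{j_k j_k'}$ and closing the induction.

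The main obstacle, as I see it, is purely bookkeeping: one must identify the specific unfolding used by Algorithm~\ref{alg:ortho} (where the orthonormal vector $v_{n_k s + j}$ is indexed by the pair $(s_{k-1}, j_k)$, while its entries range over $(i_k, s_k)$) as exactly the unfolding whose orthogonality is needed for the telescoping argument; a different unfolding convention would propagate orthogonality in the wrong direction and break the induction. I would also flag, as a prerequisite, that $n_k R_{k-1} \le m_k R_k$ must hold for enough orthonormal vectors of length $m_k R_k$ to exist. Under that mild rank condition the argument delivers $W^T W = I_N$, and more generally $\alpha\, I_N$ if the $v$'s are merely orthogonal but not unit-norm.
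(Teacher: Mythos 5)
Your proof is correct and follows essentially the same route as the paper's: both collapse the Gram product $W^TW$ from the last core to the first, using at each step that the slices $\mathcal{G}_k(s,:,j,:)$ are reshapes of mutually orthogonal vectors so that their Frobenius inner products give $\delta_{ss'}\delta_{jj'}$ (the paper's identity $\sum_{k}\mathcal{G}_l(:,k,i_l,:)\mathcal{G}_l(:,k,j_l,:)^T=\alpha\delta_{i_l j_l}\mathbf{I}$ is exactly your workhorse identity). You merely package the paper's explicit $d=3$ calculation as a clean reverse induction for general $d$, and your feasibility remark $n_kR_{k-1}\le m_kR_k$ matches the paper's own caveat following the proof.
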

\begin{proof}
We denote the factorization of a row index $0 \le p \le M$ as $p = \sum_{\alpha = 1}^3 p_\alpha m_\alpha$, and the factorization of a column index $0 \le q \le N$ as $j = \sum_{\alpha = 1}^3 q_\alpha n_\alpha$. To show the orthogonality of $\textbf{W}$, we compute the entry $(i,j)$ in $\textbf{W}^T \textbf{W}$
\begin{align}
\begin{split}
    &\textbf{W}^T \textbf{W}(i,j) = \sum_{k = 1}^M \textbf{W}^T(i, k) \textbf{W}(k, j) = \sum_{k=1}^M \textbf{W}(k,i) \textbf{W}(k,j)\\
    & = \sum_{k_1, k_2, k_3} \mathcal{G}_1(0, k_1, i_1,:)\mathcal{G}_2(:, k_2, i_2, :) (\mathcal{G}_3 (:, k_3, i_3, 0) \mathcal{G}_3 (:, k_3, j_3, 0)^T)\\
    &\mathcal{G}_2(:, k_2, j_2, :)^T\mathcal{G}_1(0, k_1, j_1,:)^T
    .
\end{split}
\label{eqn:semi-ot}
\end{align}
We initialize $\mathcal{G}_3$ as line 1-2 in Algorithm~\ref{alg:ortho}.
We show that $\textbf{A} = \sum_{k_3}\mathcal{G}_3 (:, k_3, i_3,0) \mathcal{G}_3 (:, k_3, j_3,0)^T$ is orthogonal.
If $i_3 = j_3$,
\begin{align}
    &\textbf{W}(p, q) = 
    \begin{cases}
        \sum_{k_3} \mathcal{G}_3(p, k_3, i_3, 0)^2 = 1 & \mbox{, if } p = q,\\
        \sum_{k_3} \mathcal{G}_3(p, k_3, i_3, 0) \mathcal{G}_3(q, k_3, i_3, 0) = 0& \mbox{, if } p\neq q .
    \end{cases}
\label{eqn:G3}
\end{align}
Similarly, if $i_3 \neq j_3$, we can verify that $A(p,q)$ is always 0.
Therefore, 
\begin{align}
     \textbf{A} = &\sum_{k_3}\mathcal{G}_3 (:, k_3, i_3, 0) \mathcal{G}_3 (:, k_3, j_3, 0)^T = 
    \begin{cases}
        \textbf{I} , & \mbox{ if } i_3 = j_3\\
        0, &\mbox{ if } i_3 \neq j_3
    \end{cases}
\end{align}
If $i_3 \neq j_3$, then \cref{eqn:semi-ot} is 0. Otherwise, $i_3 = j_3$ and
\begin{align*}
    &\textbf{W}^T \textbf{W}(i,j) \\
    &=\sum_{k_1, k_2} \mathcal{G}_1( 0,k_1, i_1,:)\mathcal{G}_2(:, k_2, i_2, :)
    \mathcal{G}_2(:, k_2, j_2, :)^T \mathcal{G}_1(0,k_1, j_1,:)^T.
\end{align*}

Similarly, we can show that lines 3-6 of \cref{alg:ortho} yield
\begin{align*}
    \sum_{k}\mathcal{G}_l(:, k, i_l, :) \mathcal{G}_l(:, k, j_l, :)^T = \begin{cases}
    \alpha \textbf{I} &\mbox{, if } i_l = j_l\\
    0 &\mbox{, if } i_l \neq j_l
    \end{cases},
    \text{ for } l = 1, 2
\end{align*}
This completes the proof that $\textbf{W}$ is column orthogonal.
\end{proof}

The shape of the embedding matrix will satisfy $m \gg n$, ensuring that $R_{i-1}n_i < R_im_i$ for $i = 2, 3$.
Therefore, it is always feasible to initialize the tensor cores with $Rn_\alpha$ orthogonal vectors.
But it is not guaranteed that $R_2n_3 \ge m_3$ in $\mathcal{G}_3$.
For instance, suppose $m_3 = \lceil m^{1/3} \rceil = 126$ and $n = \lceil n^{1/3} \rceil = 5$ when $m = 10^8$ and $n = 128$.
Then $Rn_3 \ge m_3$ is only satisfied when $R \le 25$.
Thus, we prefer to order the factors of the matrix dimensions $m, n$ such that $m_3 = \max(\text{factor}(m))$ and $n_3 = \min(\text{factor}(n))$. 
Although this algorithm might be infeasible for large TT-ranks, we show in Section~\ref{sec:results} that small TT-ranks are sufficient for many real-world graphs.

\subsection{Homophily-informed TT Alignment}
\label{sec:method-metis}


The embedding similarity between neighboring nodes can be inherited from TT format given an appropriate node ordering.
Let $i$ be the index of a node.
Its embedding vector is constructed from
\begin{align*}
    W[i, :] = \mathcal{G}_1(0,i_1, :, :)\mathcal{G}_2(:,i_2, :, :)\mathcal{G}_3(:,i_3, :, 0),
\end{align*}
where
\begin{align*}
    &i_1 = i / (m_2 * m_3), 
    i_2 = (i / m_3) \text{ mod } m_2 \text{, and }\\
    &i_3 = i \text{ mod } m_3.
\end{align*}
\begin{figure}[H]
    \centering
    \includegraphics[width=0.8\columnwidth]{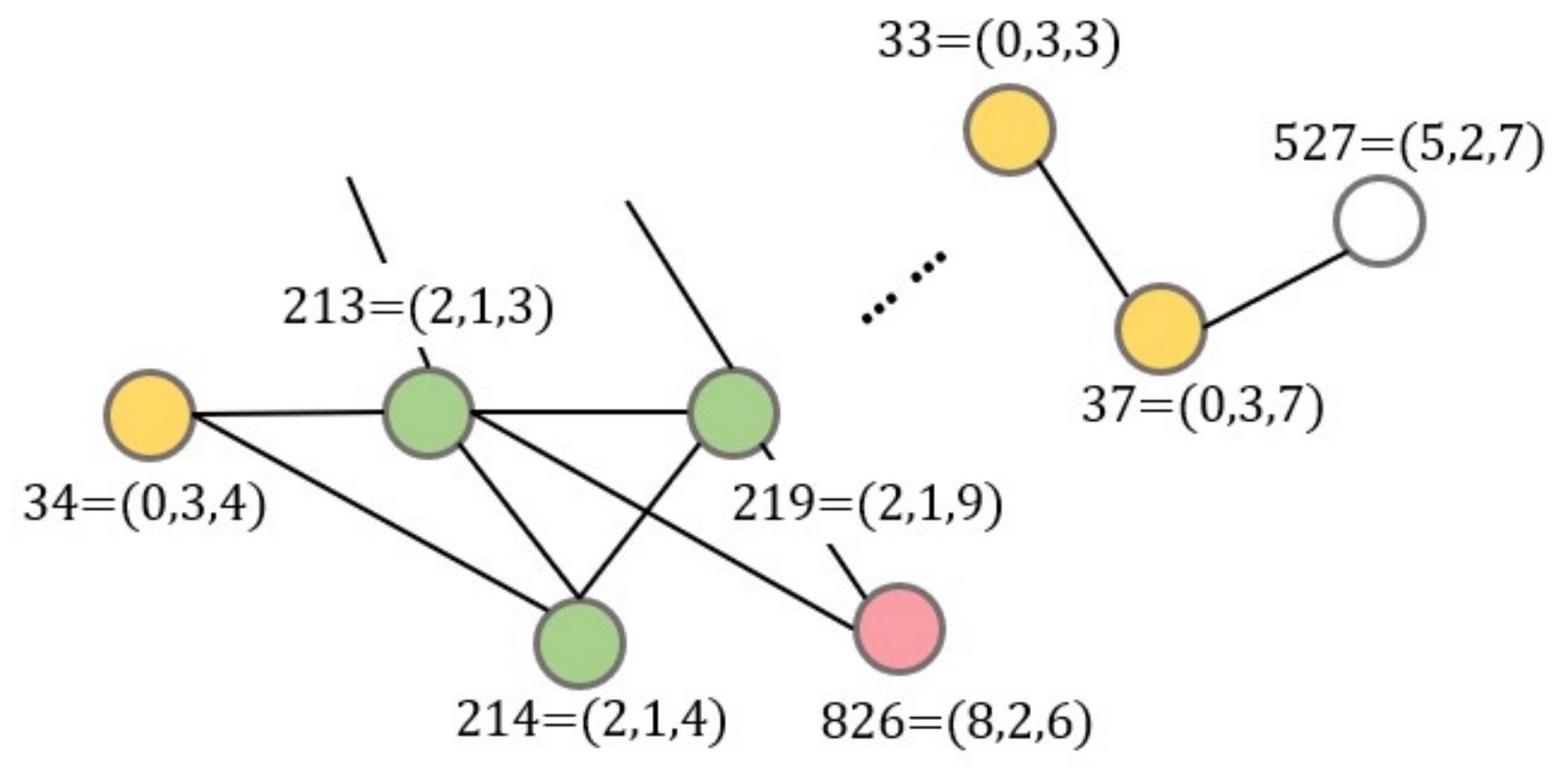}
    \caption{3D coordinate mapping of an example graph with \num{1000} nodes, assuming $m_1 = m_2 = m_3 = 10$.
    Nodes of the same color share the same $\mathcal{G}_1$ and $\mathcal{G}_2$ subtensors during embedding construction.}
    \label{fig:node-index}
\end{figure}

To see how weight sharing among nodes is determined by node ids, consider the illustration in \cref{fig:node-index}.
If the nodes are randomly ordered, constructing an embedding vector is equivalent to drawing one slice $\mathcal{G}_l(:,i_l, :, :)$ uniformly at random for $l = 1, 2, 3$ and composing the weights.
Consequently, distant vertices may share weights.
Instead, ordering the nodes based on graph topology would better promote weight sharing among topologically neighboring nodes and thereby produce homophilic representations.

Formally, for any node index $j \in [i - i_3, i+m_3 - i_3]$, the row $\textbf{W}[j, :]$ is constructed using the shared weights $\mathcal{G}_1(0,i_1, :, :)\mathcal{G}_2(:,i_2, :, :) \in \mathbb{R}^{m_1m_2 \times r}$.
The two rows $\textbf{W}[i, :]$ and $\textbf{W}[j, :]$ are formed by the same linear transformation of different set of vectors $\mathcal{G}_3(:,i_3, :, 0)$ and $\mathcal{G}_3(:,j_3, :, 0)$ respectively.
Similarly, all the nodes with indices in range $[i - i_3 - i_2m_3, i+m_3 - i_3 + (m_2 - i_2)m_3]$ share at least one factor matrix $\mathcal{G}_1(0,i_1, :, :)$ in embedding vector construction.
The graph's topology influences the weight sharing and embedding vectors can be more similar when the node indices are closer.
Hence, we need to preprocess the graph in a way that the topological closeness between nodes is proportional to the difference of their indices.

We perform hierarchical graph partitioning to discover clusters and reorder the nodes accordingly.
We denote the original graph as level-0;
to form the level $i+1$ partitioning, we recursively apply a $p_i-$way graph partitioning on each of the communities at level $i$.
In this way, there will be $p_1$ partitions at level 1, $p_1 p_2$ partitions at level 2, and so on so forth.
Let $L$ be the total number of partition levels and $0 \le p \le p_1 p_2 \dots p_L$ be a partition id at the $L$-th level partition.
We reassign the indices to nodes in the partition from range $[np/(p_1 p_2 \dots p_L), n(p+1)/ (p_1 p_2 \dots p_L)]$, where $n$ is the number of nodes in the graph.
We will assess this approach in \cref{sec:result-metis}. 

\section{Experimental Setup}

\subsection{Datasets}
\label{sec:exp:datasets}
We use three node-property prediction datasets from Open Graph Benchmark~\cite{hu2020ogb}, summarized in \cref{tab:dataset}.
We add reversed edges to the \textsf{ogbn-arxiv} and \textsf{ogbn-papers100M} datasets to make the associated graphs undirected.

We train all learnable embedding models using one-hot encoding and without the original node features provided by OGB.
We also train each model using the original node features given by OGB as a baseline to understand the quality of our design.
The embedding dimensions of our embeddings are set to be the same as the datasets' original node features.

\begin{table}[h]
\caption{Dataset statistics.
We add reversed edges to \textsf{ogbn-arxiv} and \textsf{ogbn-papers100M} to make the graph undirected.}
\label{tab:dataset}
\begin{tabular}{lrrcc}
\toprule
Dataset                  & \#Nodes           & \#Edges             &   Emb. Dim. \\ \hline
\textsf{ogbn-arxiv}      & \num{169 343}     & \num{2 332 486}     & \num{128}   \\
\textsf{ogbn-products}   & \num{2 449 029}   & \num{61 859 140}    & \num{100}   \\
\textsf{ogbn-papers100M} & \num{111 059 956} & \num{3 231 371 744} & \num{128}   \\
\bottomrule
\end{tabular}
\end{table}

\subsection{Hardware Platforms}
The machine configurations we used are summarized in \cref{tab:hardware}.
We train all the models using the AWS EC2 \textsf{g4dn.16xlarge} instance and evaluate the training efficiency on EC2 multi-GPU instances \textsf{p3.16xlarge} and \textsf{g4dn.metal}.
The \textsf{r5dn.24xlarge} instance is only used for preprocessing ogbn-papers100M dataset.

\begin{table}
\caption{Hardware platform details.}
\label{tab:hardware}
\begin{tabular}{ll}
\toprule
EC2 Type      & Hardware specification  \\ \hline
\textsf{r5dn.24xlarge}  & 2$\times$\numcores{24}, \GB{700} RAM, 100 Gbit/s network               \\
\textsf{p3.16xlarge}    & 2$\times$\numcores{16}, \GB{500} RAM, 8 V100 GPUs               \\
\textsf{g4dn.metal}     & 2$\times$\numcores{24}, \GB{384} RAM, 8 T4 GPUs                \\ 
\textsf{g4dn.16xlarge}  & 2$\times$\numcores{16}, \GB{256} RAM, 1 T4 GPUs                \\\bottomrule
\end{tabular}
\end{table}

\subsection{GNN Models and Baseline}
For each dataset, we choose two different GNN models implemented with Deep Graph Library (DGL)~\cite{wang2019deep}.
For \textsf{ogbn-arxiv}, we use GCN~\cite{kipf2016semi} and GAT~\cite{velivckovic2017graph} models.
For the two larger datasets, we use GraphSage~\cite{hamilton2017inductive} and GAT.
The hyperparameters for each model are set to those tuned by the DGL team. 

We consider the full embedding table models (Full-Emb) and position-based hash embedding (Pos-Emb)~\cite{kalantzi2021position} as our baselines.
All models are trained using one-hot encoding and no additional node features for each node, so that the model learns a unique embedding vector per node as the input to the GNNs.

\section{Results}
\label{sec:results}
%
We evaluate the performance of our proposed algorithms, varying the weight initialization, hierarchical graph partition, and training cost. 
We use \textit{TT-Emb(r)} to refer to the GNN model with a trainable tensorized embedding with TT rank $r$.
We compare our models to three other baseline options:
\begin{itemize}
    \item \textit{Orig-Feat}: Use the original features given by the OGB dataset as the non-trainable node feature.
    The features are preprocessed text associated with the nodes, such as paper abstract and product description.
    \item \textit{Full-Emb}: Train a full embedding table only with one-hot encoding for each node.
    \item \textit{Pos-Emb}: A positional-based hash embedding trained with one-hot encoding for each node.
\end{itemize}

\begin{table}[H]
\caption{Performance summary of smaller graphs.}
\label{tab:summary-small}
\resizebox{\columnwidth}{!}{%
\begin{tabular}{lllllllll}
\toprule
                                 & \multicolumn{4}{c}{\textbf{ogbn-arxiv}}                                                         & \multicolumn{4}{c}{\textbf{ogbn-products}}                                                      \\
Method                           & GCN    & GAT      & CR*          & Time$^+$ & Graphsage      & GAT           & CR* & Time$^+$ \\ \midrule
Full-Emb                         & 0.671  & 0.677    & 1$\times$    & 0.49    & 0.733          & 0.755         & 1$\times$    & 33.4     \\
Orig-Feats                       & 0.724  & 0.736    & 1$\times$    & 0.41    & 0.782          & 0.786         & 1$\times$    & 31.5     \\
Pos-Emb                          & 0.674  & 0.676    & 12$\times$   & -         & 0.758          & 0.767         & 34$\times$   &  -    \\
\textbf{TT-Emb}    & \textbf{0.681} & \textbf{0.682} & \num{1117}$\times$ & 0.97     & \textbf{0.761} & \textbf{0.784}& 415$\times$  &  75.8    \\
 \bottomrule
\end{tabular}%
}
\raggedright
\footnotesize{$*$ Compression ratio of embedding layer compared to FullEmb and Orig-Feats.\\
$+$ Embedding layer training time (sec) per epoch for ogbn-arxiv with GCN and ogbn-products with GraphSage on g4dn.16xlarge instance.}
\end{table}

\begin{table}[H]
\caption{Accuracy, training time per batch, and compression ratio (CR) of \textsf{ogbn-papers100M} using different TT ranks. }
\label{tab:paper-accu}
\resizebox{\columnwidth}{!}{%
\begin{tabular}{lccccl}
\toprule
           & \multicolumn{2}{c}{\textbf{Graphsage}} & \multicolumn{2}{c}{\textbf{GAT}} & \textbf{CR}    \\
           & Accuracy        & Time        & Accuracy     & Time     &       \\ \midrule
FullEmb    & 0.595           & 10.34       & N/A$^*$      & 7.16     & -     \\
TT-Emb(8)  & 0.612           & 0.39        & 0.632        & 0.37     & \num{81 362} \\
TT-Emb(16) & 0.618           & 0.45        & 0.631        & 0.45     & \num{23 479} \\
TT-Emb(32) & 0.620           & 0.51        & 0.633        & 0.68     & \num{6 360}  \\
TT-Emb(64) & 0.625           & 1.74        & 0.634        & 1.67     & \num{1659}  \\
\bottomrule
\end{tabular}%
}
\raggedright
\footnotesize{$*$ Out of memory on the experiment platform.\\
$+$ The graph is preprocessed with \num{6400} partitions. TT-emb(64) is initialized using \cref{alg:ttm}, and others using \cref{alg:ortho}.}
\end{table}

\Cref{tab:summary-small} summarizes the accuracy, compression ratio, and training time of the baseline models and our optimal configuration for the two smaller datasets. \cref{tab:paper-accu} shows the performance on \textsf{ogbn-papers100M} with detailed configuration of TT-emb.
For each dataset, we need to choose the TT rank, initialization algorithm, and the graph partitioning granularity. Our design is able to achieve the best accuracy among all models trained with one-hot encoding with order of magnitude fewer parameters.
In particular, on \textsf{ogbn-products}, the quality of our compact embedding is comparable to the original node feature.

We study the accuracy of tensorized models initialized from different algorithms in \cref{sec:result-init}.
Then in \cref{sec:result-metis}, we analyze how graph partitioning affects the performance of the model.
We discuss the compression and scalability of our model in \cref{sec:results-compress}. 

\subsection{Weight Initialization}
\label{sec:result-init}
\bcomment{
\begin{table}[]
\caption{Accuracy results of orthogonal and eigenvector initialization for GraphSage and GAT trained on ogbn-products without graph partition.}
\label{tab:accu-init}
\begin{tabular}{clcc}
\hline
  Model                      & Initialization & GraphSage & GAT \\ \hline
Full-Emb           & Gaussian       & 0.7338    & 0.755    \\ 
Orig-Feats            & -              & 0.7832    & 0.7622    \\ 
Pos-Emb                      & -              & 0.760     &  0.762   \\\hline
\multirow{3}{*}{TT-Emb(8)}   & eigen      & 0.7010    &  0.7457   \\
                             & orthogonal & 0.6749    &  0.7426   \\
                             & Gaussian   & 0.6682    &  0.7395   \\ \hline
\multirow{3}{*}{TT-Emb(16)}  & eigen      & 0.7232    &  0.7544   \\
                             & orthogonal & 0.7131    &  0.7464   \\
                             & Gaussian   & 0.7147    &  0.7421   \\ \hline
\multirow{3}{*}{TT-Emb(32)}  & eigen      & 0.7217    &  0.7606   \\
                             & orthogonal & -         &  -   \\
                             & Gaussian   & 0.7057    &  0.7476   \\ \hline
\end{tabular}
\end{table}
}
We assess the impact of TT ranks and initialization algorithms from \cref{sec:method-ortho} on the model accuracy. 
Taking \textsf{ogbn-products} as an example, \cref{tab:prod-accu} summarizes the accuracy of models trained while varying the three most important hyperparameters: TT rank, initialization algorithm, and graph partitioning granularity.
We focus on TT ranks and weight initialization in this section and leave graph partitioning to \cref{sec:result-metis}. 

The three horizontal groups show the model accuracy as TT rank ranges from \num{8} to \num{32}.
Regardless of the initialization algorithm, \Cref{tab:prod-accu} shows that, for GraphSage and GAT, larger TT ranks generally yield higher model accuracies.
Increasing TT rank significantly improves model performance, especially with a coarse partitioning.

The three rows under each \textit{TT-Emb(r)} group in \cref{tab:prod-accu} show that the accuracy of the models differ only by initialization algorithm. We refer to \cref{alg:ttm} as \textsf{decomp-ortho} and \cref{alg:ortho} as \textsf{ortho-core} in the table.
Comparing to initializing TT-Emb from Gaussian distribution, the two orthogonal initialization algorithms improve the test accuracy in a similar way across all configurations and models.
However, the decomposed orthogonal initialization (\cref{alg:ttm}) comes with a much higher computational cost.
It takes \mins{31} for \textsf{ogbn-papers100M} on a \textsf{r5dn.24xlarge} instance, where \Percent{25} of the time is spent on generating a random orthogonal matrix of size $110M \times 128$ using Scipy's QR decomposition~\cite{2020SciPy-NMeth}.
By contrast, \cref{alg:ortho} only takes a few seconds to generate orthogonal TT-cores using the Gram-Schmidt method for a graph of the same size.

In summary, for smaller TT ranks, the orthogonal TT-core initialization provides higher efficiency;
for large TT ranks, the TTM decomposition of orthogonal matrices yields better scalability.

\begin{table*}
\caption{Accuracy of \textsf{ogbn-products} for different TT ranks, initialization algorithms, and graph partitioning.}
\label{tab:prod-accu}
\resizebox{\textwidth}{!}{%
\begin{tabular}{llcccccccccccccc}
\toprule
         &          & \multicolumn{7}{c}{\textbf{Graphsage}}                       & \multicolumn{7}{c}{\textbf{GAT}}                           \\
\cmidrule(rl){1-2}    \cmidrule(rl){3-9}                                             \cmidrule(rl){10-16}
\multicolumn{2}{c}{\textbf{\# of partitions}} & \num{0} & \num{4} & \num{16} & \num{256} & \num{800} & \num{1600} & \num{3200} & \num{0} & \num{4} & \num{16} & \num{256} & \num{800} & \num{1600} & \num{3200} \\
\cmidrule(rl){1-2}    \cmidrule(rl){3-9}                                             \cmidrule(rl){10-16}
              & decomp-ortho          & 0.6077 & 0.7188 & 0.7382 & 0.7352 & 0.7475 & 0.7508 & 0.7539       & 0.7386 & 0.7416 & 0.7393  & 0.7434 & 0.7505 & 0.7666 & 0.7718 \\
TT-Emb(8)     & ortho-core           & 0.6321 & 0.6941 & 0.7333 & 0.7426 & 0.7517 & 0.7435 & \textbf{0.7599}          & 0.7426 & 0.7541 & 0.7515 &  0.754 & \textbf{0.7746} & 0.7642 & 0.7689 \\
              & Gaussian                & 0.6189 & 0.6859 & 0.7230  & 0.7282 & 0.7188 & 0.7187 & 0.7190         & 0.6951& 0.7147 & 0.7484 & 0.7451 & 0.7641 &0.7554  & 0.7613 \\
\cmidrule(rl){1-2}    \cmidrule(rl){3-9}                                             \cmidrule(rl){10-16}
              & decomp-ortho          & 0.6688 & 0.7314 & 0.7489 & 0.7368 & 0.7484 & \textbf{0.7606} & 0.7556          & 0.7544 & 0.7440 & 0.7508 & 0.7512 & \textbf{0.7812} & 0.7726 &  0.7746      \\
TT-Emb(16)    & ortho-core            & 0.6773 & 0.7039 & 0.7362 & 0.7379 & 0.7416 & 0.7434 & 0.7580  & 0.7464 & 0.7610 & 0.7603 & 0.7719 & 0.7803 & 0.7638 & 0.7719 \\
              & Gaussian                & 0.6716 & 0.6958 & 0.7305 & 0.7333 & 0.7385 & 0.7275 & 0.7487 & 0.7100 & 0.7559 & 0.7247 & 0.7531 & 0.7618 & 0.7637 & 0.7583 \\
\cmidrule(rl){1-2}    \cmidrule(rl){3-9}                                             \cmidrule(rl){10-16}
              & decomp-ortho          & 0.6701 & 0.7292  & 0.7414 & 0.7430 & 0.7378 & \textbf{0.7613} & 0.7566     & 0.7285  & 0.7671 & 0.7611 & 0.7747 & 0.7835 & 0.7756 & \textbf{0.7843}  \\
TT-Emb(32)    & ortho-core$^*$            & -      & -      & -      & -      & -      & -      & -      & -      & -      & -      & -      & -      & -      & -      \\
              & Gaussian               & 0.6712 & 0.7123 & 0.7396 & 0.7388 & 0.7485 & 0.7422 & 0.7455 & 0.7464 & 0.7607 & 0.7157 & 0.7719 & 0.7672 & 0.7521 &  0.7636\\
\bottomrule
\end{tabular}%
}
\raggedright
\footnotesize{$*$ The orthogonal initialization applies when factors of embedding dimension is sufficiently larger than TT rank. See Section~\ref{sec:method-ortho} for more details.}
\end{table*}

\subsection{Graph Partitioning}
\label{sec:result-metis}
\bcomment{
\begin{figure*}
     \centering
     \begin{subfigure}[b]{0.49\textwidth}
         \centering
         \includegraphics[width=\textwidth]{Figures/graphsage-metis.pdf}
         \caption{Graphsage}
         \label{fig:sage-metis}
     \end{subfigure}
     \begin{subfigure}[b]{0.45\textwidth}
         \centering
         \includegraphics[width=\textwidth]{Figures/gat-metis.pdf}
         \caption{GAT}
         \label{fig:gat-metis}
     \end{subfigure}
     \hfill
        \caption{Accuracy of models trained with different graph partitions and TT ranks.}
        \label{fig:metic-accu}
\end{figure*}
}

To partition the graph, we use METIS, an algorithm based on the multilevel recursive-bisection and multilevel k-way partitioning scheme~\cite{karypis1997metis}.
We reorder the nodes in the graph based on the partition results such that nodes in the same partition are indexed continuously (\cref{sec:method-metis}).
This reordering encourages neighboring nodes to be constructed with shared parameters in their TT cores, thus inheriting a common embedding representation in each level of the partition.

To avoid correlation between node ids and topology from the default node ordering, we randomly shuffle the node orders before training or performing other techniques.
Doing so allows us to study the effects of graph partitioning.
Again, we refer to \cref{tab:prod-accu} when discussing the accuracy of GraphSage and GAT trained with different TT ranks and numbers of partitions.
Using even a small number of partitions helps to improve TT-emb's accuracy significantly.
Partitioning the graph into four clusters improves the accuracy of TT-emb Graphsage by up to \Percent{6.7} and for GAT by up to \Percent{3.8}.

However, since the nodes within a partition are randomly ordered, the locality of node indexing is not guaranteed.
With a coarse partition, TT-emb does not capture the shared property, which results in lower accuracy.
As we increase the number of partitions to \num{3200}, the accuracy of Graphsage improves by an average of \Percent{10.5} and of GAT by an average of \Percent{3.2} with orthogonal initialization.
On the other hand, an overly fine-grained graph partition is unnecessary as TT-emb is robust to the bottom-level node permutation within the METIS partition hierarchy.  

For a fixed TT rank, the test accuracy generally improves as the number of partitions increases.
The minimal number of partitions to reach the best for GraphSage and GAT are \num{3200} and \num{800}, respectively.
With the optimal partition configuration, our design outperforms the other two embedding models using any TT rank comparing the the baselines in \cref{tab:summary-small}. 

Moreover, We observe that graph partition is particularly effective in homophily graphs with small TT ranks as shown in \cref{tab:paper-accu}.
In \textsf{ogbn-papers100M}, compared to using default node ordering, the test accuracy of TT-emb(8) improves by \Percent{6} with \num{6400} partitions, while that improvement decreases to \Percent{3} with TT rank 128.

\bcomment{
\begin{table*}[]
\caption{GraphSage test accuracy with different number of partitions and TT ranks. The embedding is initialized with orthogonal vectors.}
\label{tab:sage-accu-part}
\begin{tabular}{llllllllllll}
\toprule
\textbf{\#partitions} & \num{0} & \num{2} & \num{4} & \num{8} & \num{16} & \num{32} & \num{64} & \num{128} & \num{256} & \num{512} & \num{1024} \\
\midrule
TT rank 8             & 0.6740  & 0.6661  & 0.6841  & 0.6927  & 0.7301   & 0.7242   &  0.7333  &  0.7396   & 0.7343    & 0.7433    & 0.7380     \\
TT rank 16            & 0.6865  & 0.7091  & 0.7087  & 0.7134  & 0.7355   & 0.7353   &  0.7357  &  0.7432   & 0.7395    & 0.7341    & 0.7383     \\
\bottomrule
\end{tabular}
\end{table*}
}
To further understand the robustness of our model against node permutation, we implemented a recursive graph partitioning scheme based on METIS.
We recursively partition each subgraph returned by METIS into $p_i$ clusters at the $i$-th level.
This scheme simulates the hierarchical partitioning algorithm of METIS while allowing us to precisely control the branching at each level.

\Cref{fig:accu-perm} compares the accuracy of models trained with the same recursive partition strategy:
first partition the graph into 125 parts, then divide each part into 140 partitions.
These two numbers are chosen from the first two TT-core dimensions $m_1 = 125$, $m_2 = 140$, and $m_3 = 150$.
We permute the ordering of the partition IDs in the following way:
\begin{itemize}
    \item \textbf{No perm}: keep the original partition returned by METIS
    \item \textbf{1st-level perm}: randomly shuffle the ordering of the first level $m_1$ partitions, each roughly containing $m_2m_3$ nodes
    \item \textbf{2nd-level perm}: randomly shuffle the ordering of the second level $m_1m_2$ partitions, each partition contains $m_3$ nodes on average
\end{itemize}
\begin{figure}
    \centering
    \includegraphics[width=0.8\columnwidth]{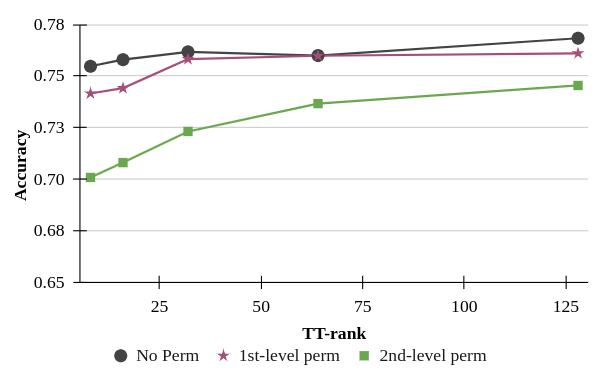}
    \caption{Test accuracy of TT-Emb Graphsage models trained with different graph partition strategies on ogbn-products.}
    \label{fig:accu-perm}
\end{figure}
The accuracy is modestly affected by a 1st-level permutation, while the accuracy decreases by at least \Percent{3} due to 2nd-level permutation.
By \cref{eqn:tt-embedding}, after the METIS reordering, each cluster of $m_2 m_3$ nodes are constructed using the same subtensor in $\mathcal{G}_1$, and every $m_3$ consecutive nodes share the weights from $\mathcal{G}_2$. The 1st-level permutation only interchanges ordering of subtensors $\mathcal{G}_1$, which ensures the graph homophily is preserved.
However, the 2nd-level permutation isolates each group of $m_2$ clusters belonging to a same level-1 partition.
Furthermore, every $m_3$ nodes are constructed with a potentially different subtensor in $\mathcal{G}_1$.
Hence, we observe significant accuracy loss due to lack of homophily.

\bcomment{
\begin{table*}[h]
\caption{GraphSage test accuracy with and without randomly permute the first level partition ID.}
\label{tab:sage-accu-perm1}
\begin{tabular}{llllllllllll}
\hline
\textbf{TT rank} & 8 & 16 & 32 & 64 & 128\\ \hline
No perm          & 0.7548 &  0.7580 & 0.7618 & 0.7600 & 0.7684\\
1st-level perm   & 0.7416 &  0.7442 & 0.7583 & 0.7599 & 0.7611\\ 
2nd-level perm   & 0.7009 &  0.7081 & 0.7232 & 0.7367 & 0.7455\\
\hline
\end{tabular}
\end{table*}
}

\begin{table*}[t]
\caption{The original dimensions of embedding tables in OGB datasets and their respective TT decomposition parameters.}
\resizebox{\textwidth}{!}{%
\begin{tabular}{l@{}rrcccrrrrrr@{}}
\toprule
Dataset & \multicolumn{2}{c}{\begin{tabular}[c]{@{}c@{}}Emb. Dimensions\end{tabular}} & \multicolumn{3}{c}{TT-Core Shapes}                  & \multicolumn{3}{c}{\# of TT Parameters} & \multicolumn{3}{c}{Memory Reduction}                                           \\
& \# Nodes                                  & Dim.                                 & $\mathcal{G}_1$ & $\mathcal{G}_2$ & $\mathcal{G}_3$ & $R=16$    & $R=32$    & $R=64$   & $R=16$ & $R=32$ & $R=64$ \\
\cmidrule(rl){1-1} \cmidrule(rl){2-3} \cmidrule(rl){4-6} \cmidrule(rl){7-9} \cmidrule(rl){10-12}
\textsf{ogbn-arxiv} & \num{169 363}                                     & 128                                        & (1, 55, 8, $R$)  & ($R$, 55, 4, $R$)  & ($R$, 56, 4, 1)  & \num{66 944}        & \num{246 528}        & \num{943 616}      & 323 & 88  & 23   \\
\textsf{ogbn-products} & \num{2 449 029}                                      & 100                                        & (1, 125, 4, $R$)  & ($R$, 140, 5, $R$)  & ($R$, 140, 5, 1)  & \num{198 400}      & \num{755 200}        & \num{2 944 000}      & \num{1580} & 415  & 106   \\
\textsf{ogbn-papers100M}\ & \ \num{111 059 956}                                      & 128                                        & (1, 480, 8, $R$)  & ($R$, 500, 4, $R$)  & ($R$, 500, 4, 1)  & \num{605 440}        & \num{2 234 880}        & \num{8 565 760}      & \num{23 479}  & \num{6360}  & \num{1659}   \\
\bottomrule
\end{tabular}%
}
\label{tab:ogb-dim-tt}
\end{table*}

\subsection{Model Compression and Training Time}
\label{sec:results-compress}
Tensor-train compresses the embedding tables significantly.
\Cref{tab:ogb-dim-tt} compares the memory requirement between the full embedding table and TT-embedding across different TT ranks for the three datasets.
On \textsf{ogbn-arxiv}, to achieve the same level of accuracy as Full-Emb, we use a TT rank of at least 64 on GCN and at least 8 on GAT.
This configuration results in a maximum compression ratio of \num{22.9}$\times$ on GCN and \num{1117}$\times$ on GAT.
The memory savings grows as the graph size increases.
As shown in \cref{tab:prod-accu}, the minimum TT rank required to outperform the Full-Emb baseline is 8 for both Graphsage and GAT, which results in a model \num{5762}$\times$ smaller than Full-Emb.
For the largest dataset, \textsf{ogbn-papers100M}, our model achieves a maximum compression ratio of \num{81 362}$\times$ over Full-Emb. 

\bcomment{
\begin{table*}[]
\caption{TT compressed embedding table size and memory reduction compared to full embedding.}
\label{tab:datasets-compress}
\begin{tabular}{rrrrrrr}
\toprule
\multicolumn{1}{l}{} & \multicolumn{2}{c}{\textbf{ogbn-arxiv}} & \multicolumn{2}{c}{\textbf{ogbn-products}} & \multicolumn{2}{c}{\textbf{ogbn-papers100M}} \\
TT Rank              & Size (GB)          & Reduction          & Size (GB)            & Reduction           & Size (GB)             & Reduction            \\
\midrule
8                    & 7.76E-05           & 1,117.91           & 2.18E-04             & 5,762.42            & 6.99E-04              & 81,362.61            \\
16                   & 2.68E-04           & 323.83             & 7.94E-04             & 1,580.02            & 2.42E-03              & 23,479.91            \\
32                   & 9.86E-04           & 87.94              & 3.02E-03             & 415.09              & 8.94E-03              & 6,360.82             \\
64                   & 3.77E-03           & 22.97              & 1.18E-02             & 106.48              & 3.43E-02              & 1,659.59             \\
128                  & 1.48E-02           & 5.88               & 4.65E-02             & 26.97               & 1.34E-01              & 424.15               \\
\bottomrule
\end{tabular}
\end{table*}
}

The magnitude of achievable compression can reduce or eliminate the need for CPU-host-to-GPU communication for embeddings, thereby enabling significantly higher GPU utilization.
For the 110~million node dataset, the embeddings need just \GB{0.1}, allowing the storage and training of the full model on a single GPU.
Moreover, in case of multi-GPU training, TT-emb makes it possible to store a copy of embedding table on each GPU, thus reducing communication during training.

Training times are compared \cref{tab:1gpu-time}.
It shows the training time of the Full-Emb model and TT-Emb compressed models for the three datasets using a single GPU.
For the two smaller datasets, the full embedding table can be store on GPU.
By using TT compression, each embedding vector is constructed through two back-to-back GEMMs (general dense matrix-matrix multiplications) instead of pure memory access for lookups.
Therefore, on \textsf{ogbn-arxiv} and \textsf{ogbn-products}, we observe up to \num{33.6}$\times$ slow down with a large TT rank.
However, to achieve the same level of accuracy as the baseline model, one only needs to use a TT rank as small as 8 with sufficiently large partition numbers and our proposed initialization.
Doing so results in 1.28$\times$ and 2.3$\times$ training slowdowns for \textsf{ogbn-arxiv} and \textsf{ogbn-products}, respectively.

For \textsf{ogbn-papers100M}, where the full embedding table does not fit in GPU memory, TT-emb outperforms the baseline with a minimum of 19.3$\times$ speedup in both models using TT rank 8.
\Cref{fig:paper-1gpu-time} breaks down the time spent in each computation phase for \textsf{ogbn-papers100M} training, including forward propagation (\textsf{fwd}), backward propagation (\textsf{bwd}), and parameter update (\textsf{upd}).
The full embedding model requires storing and updating embedding parameters on the CPU, which consist of \Percent{90} of the training time and thus limits the potential of such models on large graphs.
On the other hand, our model fits entirely on the GPU and thus achieves significant speedup in updating model parameters but spend more time in the embedding construction and gradient computation.

\begin{table}[h]
\caption{Single GPU training time per epoch on ogbn-arxiv, and per batch on ogbn-products and ogbn-papers100M, benchmarked on p3.16xlarge instance.}
\label{tab:1gpu-time}
\resizebox{\columnwidth}{!}{%
\begin{tabular}{lcccccc}
\hline
            & \multicolumn{2}{c}{\textbf{ogbn-arxiv}} & \multicolumn{2}{c}{\textbf{ogbn-products}}              & \multicolumn{2}{c}{\textbf{ogbn-papers100M}} \\
            & GCN                          & GAT      & GraphSage & GAT & GraphSage      & GAT     \\ \hline
Full Emb    & 0.14     & 0.63     & 0.12                        & 0.18                  & 10.34                            & 7.16        \\
TT-emb(8)   & 0.18                         & 0.66     & 0.28                        & 0.28                  & 0.39                             & 0.37  \\
TT-emb(16)  & 0.18                         & 0.67     & 0.33                        & 0.31                  & 0.45                             & 0.45   \\
TT-emb(32)  & 0.20                         & 0.69     & 0.49                        & 0.43                  & 0.51                             & 0.68  \\
TT-emb(64)  & 0.25                         & 0.75     & 1.29                        & 0.99                  & 1.74                             & 1.67  \\
TT-emb(128) & 0.49                         & 0.99     & 4.65                         & 3.37                 & 5.70                             & 5.37 \\ \hline
\end{tabular}
}
\end{table}

\begin{figure}
    \centering
    \includegraphics[width=.8\columnwidth]{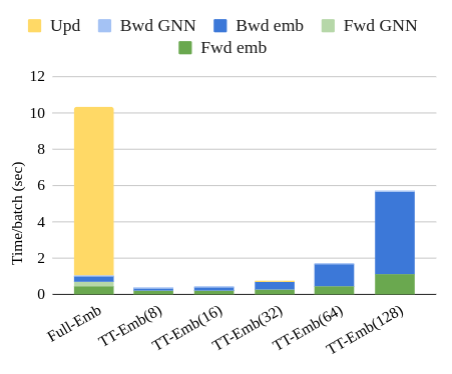}
    \caption{Time spent in each kernel for \textsf{ogbn-papers100M }training.
    Parameter update of Full-Emb is performed on CPU, whereas all TT-Emb operations are on GPU.}
    \label{fig:paper-1gpu-time}
\end{figure}

\begin{figure}[ht]
    \centering
    \includegraphics[scale=0.4]{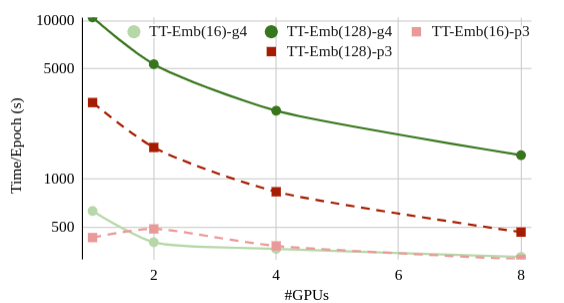}
    \caption{Multi-GPU training time of TT-Emb models for \textsf{ogbn-papers100M} using GraphSage on g4dn.metal instance.}
    \label{fig:multi-time}
\end{figure}


We also train the models on multi-GPUs to study the scaling potential of our method for even larger graphs.
\Cref{fig:multi-time} shows the training time per epoch of \textsf{ogbn-papers100M} using all TT-emb models and different number of GPUs.
The green lines represent for the results collected using a \textsf{g4dn.metal} instance, and the red ones are from a \textsf{p3.16xlarge} instance.
Within each color group, the darker shades correspond to model configuration with larger TT ranks.
The small TT rank models benefit less from multi-GPU training. 
Using 8 GPUs, the TT-Emb(8) Graphsage models are less than 2$\times$ faster than single-GPU training on both instances.
Doubling the TT rank increases the computation for TT-emb by 4$\times$.
The embedding computation consist of \Percent{77.2} for TT rank 8, compared to \Percent{95.8} for TT rank 128.
Therefore, TT-emb dominates training time with large TT ranks, and multi-GPU distribution achieves a nearly linear speedup for those models.

\bcomment{
\begin{table}[H]
\caption{Training time(s) per epoch of GraphSage for ogbn-papers100M using different number of GPUs.}
\label{tab:multigpu-sage}
\begin{tabular}{lllll}
\hline
rank/\#GPUs & 1         & 2         & 4         & 8        \\ \hline
8           & 461.4994  & 367.3915  & 323.6885  &  \\
16          & 632.1466  & 401.7191  & 364.7512  & 324.8754 \\
32          & 1138.6521 & 581.925   & 377.6074  & 330.2817 \\
64          & 3027.1234 & 1557.2169 & 810.3751  & 465.7705 \\
128         & 10474.159 & 5314.4698 & 2707.0591 & 1419.7927 \\ \hline
\end{tabular}
\end{table}

\begin{table}[H]
\caption{Training time(s) per epoch of GAT for ogbn-papers100M using different number of GPUs.}
\label{tab:multigpu-gat}
\begin{tabular}{lllll}
\hline
rank/\#GPUs & 1         & 2        & 4        & 8        \\ \hline
8           & 887.3191  & 499.2087 & 331.0916 & 296.6307 \\
16          & 1016.3493 & 539.7931 & 349.2197 & 309.0176 \\
32          & 1405.0199 & 734.5562 & 416.1655 & 307.7764 \\
64          & 2730.4686 &1429.2423 & 776.3833 & 439.4284 \\
128         & 7989.6843 & 4058.9346 & 2126.9014 &          \\ \hline
\end{tabular}
\end{table}

\begin{table*}[H]
\caption{Detailed training time of GraphSage on \textsf{ogbn-papers100M} using a \textsf{g4dn-16xlarge} instance.}
\label{tab:my-table}
\begin{tabular}{llllllll}
\hline
            & Fwd emb & Fwd GNN & Bwd emb & Bwd GNN & Upd     & Total (sec/batch) & Speedup \\ \hline
Full emb    & 0.7576  & 0.1562  & 0.5086  & 0.0948  & 11.2312 & 12.5158           & -       \\
TT rank 8   & 0.2055  & 0.0349  & 0.1882  & 0.0917  & 0.0036  & 0.5239            & 23.89   \\
TT rank 16  & 0.2375  & 0.0365  & 0.2768  & 0.0871  & 0.0065  & 0.6444            & 19.42   \\
TT rank 32  & 0.3308  & 0.0356  & 0.6143  & 0.082   & 0.0058  & 1.0685            & 11.71   \\
TT rank 64  & 0.6044  & 0.0365  & 1.8882  & 0.1184  & 0.0269  & 2.6744            & 4.68    \\
TT rank 128 & 1.5524  & 0.0356  & 6.9876  & 0.2173  & 0.0024  & 8.7953            & 1.42    \\ \hline
\end{tabular}
\end{table*}
}
\section{Conclusion}
\label{sec:conclusion}

Our most significant results lend strong \emph{quantitative} evidence to an intuitive idea:
one should incorporate knowledge about the global structure of the input graph \emph{explicitly} into the training process.
For example, our proposed algorithm directly uses hierarchical graph structure, discovered via hierarchical graph partitioning, to impose additional structure on the compact parameterization induced by our use of TT decomposition.
Intuitively, this approach encourages homophilic node embeddings;
empirically, we observed instances of improvements in model accuracy by nearly \Percent{5} compared to a state-of-the-art baseline.
It is even possible to learn embeddings \emph{without} using node features while achieving performance competitive with methods that do use such features, which may seem surprising.
Given that many real-world graphs lack such features, this result is especially encouraging.

However, partitioning alone is not enough to produce a good overall method.
A second critical element of our work is an intentional choice of weight initialization where we orthogonalize the TT-Emb weights. 
This initialization strategy significantly improves the model accuracy and convergence.
Our experiments show that by combining the initialization algorithms and graph partitioning, our overall design outperforms the state-of-the-art trainable embedding methods on all node property prediction tasks, with order of magnitude fewer parameters.

Overall, the combination of TT, graph partitioning, and judicious weight-initialization reduces the size of a node embedding by over \num{1659}$\times$ on an industrial graph with 110M nodes, while maintaining or improving accuracy and speeding up training by as much as 5$\times$.
Nevertheless, several important questions remain.
For instance, precisely why orthogonalized embedding for GNNs improves classification accuracy is not fully understood.
And while our methods are effective on homophilic graphs, the performance on heterophilic and heterogenous graphs remains open.

\bibliographystyle{ACM-Reference-Format} 
\bibliography{Tensor-train}


\begin{thebibliography}{28}


\ifx \showCODEN    \undefined \def \showCODEN     #1{\unskip}     \fi
\ifx \showDOI      \undefined \def \showDOI       #1{#1}\fi
\ifx \showISBNx    \undefined \def \showISBNx     #1{\unskip}     \fi
\ifx \showISBNxiii \undefined \def \showISBNxiii  #1{\unskip}     \fi
\ifx \showISSN     \undefined \def \showISSN      #1{\unskip}     \fi
\ifx \showLCCN     \undefined \def \showLCCN      #1{\unskip}     \fi
\ifx \shownote     \undefined \def \shownote      #1{#1}          \fi
\ifx \showarticletitle \undefined \def \showarticletitle #1{#1}   \fi
\ifx \showURL      \undefined \def \showURL       {\relax}        \fi
\providecommand\bibfield[2]{#2}
\providecommand\bibinfo[2]{#2}
\providecommand\natexlab[1]{#1}
\providecommand\showeprint[2][]{arXiv:#2}

\bibitem[Bollacker et~al\mbox{.}(2008)]%
        {bollacker2008freebase}
\bibfield{author}{\bibinfo{person}{Kurt Bollacker}, \bibinfo{person}{Colin
  Evans}, \bibinfo{person}{Praveen Paritosh}, \bibinfo{person}{Tim Sturge},
  {and} \bibinfo{person}{Jamie Taylor}.} \bibinfo{year}{2008}\natexlab{}.
\newblock \showarticletitle{Freebase: a collaboratively created graph database
  for structuring human knowledge}. In \bibinfo{booktitle}{\emph{Proceedings of
  the 2008 ACM SIGMOD international conference on Management of data}}.
  \bibinfo{pages}{1247--1250}.
\newblock


\bibitem[Cohen et~al\mbox{.}(2016)]%
        {cohen2016expressive}
\bibfield{author}{\bibinfo{person}{Nadav Cohen}, \bibinfo{person}{Or Sharir},
  {and} \bibinfo{person}{Amnon Shashua}.} \bibinfo{year}{2016}\natexlab{}.
\newblock \showarticletitle{On the expressive power of deep learning: A tensor
  analysis}. In \bibinfo{booktitle}{\emph{Conference on learning theory}}.
  PMLR, \bibinfo{pages}{698--728}.
\newblock


\bibitem[Garipov et~al\mbox{.}(2016)]%
        {garipov2016ultimate}
\bibfield{author}{\bibinfo{person}{Timur Garipov}, \bibinfo{person}{Dmitry
  Podoprikhin}, \bibinfo{person}{Alexander Novikov}, {and}
  \bibinfo{person}{Dmitry Vetrov}.} \bibinfo{year}{2016}\natexlab{}.
\newblock \showarticletitle{Ultimate tensorization: compressing convolutional
  and fc layers alike}.
\newblock \bibinfo{journal}{\emph{arXiv preprint arXiv:1611.03214}}
  (\bibinfo{year}{2016}).
\newblock


\bibitem[Ghaemmaghami et~al\mbox{.}(2020)]%
        {ghaemmaghami2020training}
\bibfield{author}{\bibinfo{person}{Benjamin Ghaemmaghami},
  \bibinfo{person}{Zihao Deng}, \bibinfo{person}{Benjamin Cho},
  \bibinfo{person}{Leo Orshansky}, \bibinfo{person}{Ashish~Kumar Singh},
  \bibinfo{person}{Mattan Erez}, {and} \bibinfo{person}{Michael Orshansky}.}
  \bibinfo{year}{2020}\natexlab{}.
\newblock \showarticletitle{Training with multi-layer embeddings for model
  reduction}.
\newblock \bibinfo{journal}{\emph{arXiv preprint arXiv:2006.05623}}
  (\bibinfo{year}{2020}).
\newblock


\bibitem[Hamilton et~al\mbox{.}(2017)]%
        {hamilton2017inductive}
\bibfield{author}{\bibinfo{person}{Will Hamilton}, \bibinfo{person}{Zhitao
  Ying}, {and} \bibinfo{person}{Jure Leskovec}.}
  \bibinfo{year}{2017}\natexlab{}.
\newblock \showarticletitle{Inductive representation learning on large graphs}.
\newblock \bibinfo{journal}{\emph{Advances in neural information processing
  systems}}  \bibinfo{volume}{30} (\bibinfo{year}{2017}).
\newblock


\bibitem[Hrinchuk et~al\mbox{.}(2019)]%
        {hrinchuk2019tensorized}
\bibfield{author}{\bibinfo{person}{Oleksii Hrinchuk}, \bibinfo{person}{Valentin
  Khrulkov}, \bibinfo{person}{Leyla Mirvakhabova}, \bibinfo{person}{Elena
  Orlova}, {and} \bibinfo{person}{Ivan Oseledets}.}
  \bibinfo{year}{2019}\natexlab{}.
\newblock \showarticletitle{Tensorized embedding layers for efficient model
  compression}.
\newblock \bibinfo{journal}{\emph{arXiv preprint arXiv:1901.10787}}
  (\bibinfo{year}{2019}).
\newblock


\bibitem[Hu et~al\mbox{.}(2020a)]%
        {hu2020ogb}
\bibfield{author}{\bibinfo{person}{Weihua Hu}, \bibinfo{person}{Matthias Fey},
  \bibinfo{person}{Marinka Zitnik}, \bibinfo{person}{Yuxiao Dong},
  \bibinfo{person}{Hongyu Ren}, \bibinfo{person}{Bowen Liu},
  \bibinfo{person}{Michele Catasta}, {and} \bibinfo{person}{Jure Leskovec}.}
  \bibinfo{year}{2020}\natexlab{a}.
\newblock \showarticletitle{Open Graph Benchmark: Datasets for Machine Learning
  on Graphs}.
\newblock \bibinfo{journal}{\emph{arXiv preprint arXiv:2005.00687}}
  (\bibinfo{year}{2020}).
\newblock


\bibitem[Hu et~al\mbox{.}(2020b)]%
        {hu2020provable}
\bibfield{author}{\bibinfo{person}{Wei Hu}, \bibinfo{person}{Lechao Xiao},
  {and} \bibinfo{person}{Jeffrey Pennington}.}
  \bibinfo{year}{2020}\natexlab{b}.
\newblock \showarticletitle{Provable benefit of orthogonal initialization in
  optimizing deep linear networks}.
\newblock \bibinfo{journal}{\emph{arXiv preprint arXiv:2001.05992}}
  (\bibinfo{year}{2020}).
\newblock


\bibitem[Kalantzi and Karypis(2021)]%
        {kalantzi2021position}
\bibfield{author}{\bibinfo{person}{Maria Kalantzi} {and}
  \bibinfo{person}{George Karypis}.} \bibinfo{year}{2021}\natexlab{}.
\newblock \showarticletitle{Position-based Hash Embeddings For Scaling Graph
  Neural Networks}. In \bibinfo{booktitle}{\emph{2021 IEEE International
  Conference on Big Data (Big Data)}}. IEEE, \bibinfo{pages}{779--789}.
\newblock


\bibitem[Kang et~al\mbox{.}(2020)]%
        {kang2020deep}
\bibfield{author}{\bibinfo{person}{Wang-Cheng Kang}, \bibinfo{person}{Derek
  Zhiyuan~Cheng}, \bibinfo{person}{Tiansheng Yao}, \bibinfo{person}{Xinyang
  Yi}, \bibinfo{person}{Ting Chen}, \bibinfo{person}{Lichan Hong}, {and}
  \bibinfo{person}{Ed~H Chi}.} \bibinfo{year}{2020}\natexlab{}.
\newblock \showarticletitle{Deep hash embedding for large-vocab categorical
  feature representations}.
\newblock \bibinfo{journal}{\emph{arXiv e-prints}} (\bibinfo{year}{2020}),
  \bibinfo{pages}{arXiv--2010}.
\newblock


\bibitem[Karypis and Kumar(1997)]%
        {karypis1997metis}
\bibfield{author}{\bibinfo{person}{George Karypis} {and} \bibinfo{person}{Vipin
  Kumar}.} \bibinfo{year}{1997}\natexlab{}.
\newblock \showarticletitle{METIS: A software package for partitioning
  unstructured graphs, partitioning meshes, and computing fill-reducing
  orderings of sparse matrices}.
\newblock  (\bibinfo{year}{1997}).
\newblock


\bibitem[Kipf and Welling(2016)]%
        {kipf2016semi}
\bibfield{author}{\bibinfo{person}{Thomas~N Kipf} {and} \bibinfo{person}{Max
  Welling}.} \bibinfo{year}{2016}\natexlab{}.
\newblock \showarticletitle{Semi-supervised classification with graph
  convolutional networks}.
\newblock \bibinfo{journal}{\emph{arXiv preprint arXiv:1609.02907}}
  (\bibinfo{year}{2016}).
\newblock


\bibitem[Louizos et~al\mbox{.}(2017)]%
        {louizos2017learning}
\bibfield{author}{\bibinfo{person}{Christos Louizos}, \bibinfo{person}{Max
  Welling}, {and} \bibinfo{person}{Diederik~P Kingma}.}
  \bibinfo{year}{2017}\natexlab{}.
\newblock \showarticletitle{Learning sparse neural networks through $ L\_0 $
  regularization}.
\newblock \bibinfo{journal}{\emph{arXiv preprint arXiv:1712.01312}}
  (\bibinfo{year}{2017}).
\newblock


\bibitem[McPherson et~al\mbox{.}(2001)]%
        {mcpherson2001homophily}
\bibfield{author}{\bibinfo{person}{Miller McPherson}, \bibinfo{person}{Lynn
  Smith-Lovin}, {and} \bibinfo{person}{James~M. Cook}.}
  \bibinfo{year}{2001}\natexlab{}.
\newblock \showarticletitle{Birds of a feather: homophily in social networks}.
\newblock \bibinfo{journal}{\emph{Annual Review of Sociology}}
  \bibinfo{volume}{27} (\bibinfo{date}{August} \bibinfo{year}{2001}),
  \bibinfo{pages}{415--444}.
\newblock
\urldef\tempurl%
\url{https://doi.org/10.1146/annurev.soc.27.1.415}
\showDOI{\tempurl}


\bibitem[Molchanov et~al\mbox{.}(2016)]%
        {molchanov2016pruning}
\bibfield{author}{\bibinfo{person}{Pavlo Molchanov}, \bibinfo{person}{Stephen
  Tyree}, \bibinfo{person}{Tero Karras}, \bibinfo{person}{Timo Aila}, {and}
  \bibinfo{person}{Jan Kautz}.} \bibinfo{year}{2016}\natexlab{}.
\newblock \showarticletitle{Pruning convolutional neural networks for resource
  efficient inference}.
\newblock \bibinfo{journal}{\emph{arXiv preprint arXiv:1611.06440}}
  (\bibinfo{year}{2016}).
\newblock


\bibitem[Oseledets(2011)]%
        {oseledets2011tensor}
\bibfield{author}{\bibinfo{person}{Ivan~V Oseledets}.}
  \bibinfo{year}{2011}\natexlab{}.
\newblock \showarticletitle{Tensor-train decomposition}.
\newblock \bibinfo{journal}{\emph{SIAM Journal on Scientific Computing}}
  \bibinfo{volume}{33}, \bibinfo{number}{5} (\bibinfo{year}{2011}),
  \bibinfo{pages}{2295--2317}.
\newblock


\bibitem[Serr{\`a} and Karatzoglou(2017)]%
        {serra2017getting}
\bibfield{author}{\bibinfo{person}{Joan Serr{\`a}} {and}
  \bibinfo{person}{Alexandros Karatzoglou}.} \bibinfo{year}{2017}\natexlab{}.
\newblock \showarticletitle{Getting deep recommenders fit: Bloom embeddings for
  sparse binary input/output networks}. In
  \bibinfo{booktitle}{\emph{Proceedings of the Eleventh ACM Conference on
  Recommender Systems}}. \bibinfo{pages}{279--287}.
\newblock


\bibitem[Shi et~al\mbox{.}(2020)]%
        {shi2020compositional}
\bibfield{author}{\bibinfo{person}{Hao-Jun~Michael Shi},
  \bibinfo{person}{Dheevatsa Mudigere}, \bibinfo{person}{Maxim Naumov}, {and}
  \bibinfo{person}{Jiyan Yang}.} \bibinfo{year}{2020}\natexlab{}.
\newblock \showarticletitle{Compositional embeddings using complementary
  partitions for memory-efficient recommendation systems}. In
  \bibinfo{booktitle}{\emph{Proceedings of the 26th ACM SIGKDD International
  Conference on Knowledge Discovery \& Data Mining}}.
  \bibinfo{pages}{165--175}.
\newblock


\bibitem[Tito~Svenstrup et~al\mbox{.}(2017)]%
        {tito2017hash}
\bibfield{author}{\bibinfo{person}{Dan Tito~Svenstrup}, \bibinfo{person}{Jonas
  Hansen}, {and} \bibinfo{person}{Ole Winther}.}
  \bibinfo{year}{2017}\natexlab{}.
\newblock \showarticletitle{Hash embeddings for efficient word
  representations}.
\newblock \bibinfo{journal}{\emph{Advances in neural information processing
  systems}}  \bibinfo{volume}{30} (\bibinfo{year}{2017}).
\newblock


\bibitem[Veli{\v{c}}kovi{\'c} et~al\mbox{.}(2017)]%
        {velivckovic2017graph}
\bibfield{author}{\bibinfo{person}{Petar Veli{\v{c}}kovi{\'c}},
  \bibinfo{person}{Guillem Cucurull}, \bibinfo{person}{Arantxa Casanova},
  \bibinfo{person}{Adriana Romero}, \bibinfo{person}{Pietro Lio}, {and}
  \bibinfo{person}{Yoshua Bengio}.} \bibinfo{year}{2017}\natexlab{}.
\newblock \showarticletitle{Graph attention networks}.
\newblock \bibinfo{journal}{\emph{arXiv preprint arXiv:1710.10903}}
  (\bibinfo{year}{2017}).
\newblock


\bibitem[Virtanen et~al\mbox{.}(2020)]%
        {2020SciPy-NMeth}
\bibfield{author}{\bibinfo{person}{Pauli Virtanen}, \bibinfo{person}{Ralf
  Gommers}, {and} \bibinfo{person}{et al.}} \bibinfo{year}{2020}\natexlab{}.
\newblock \showarticletitle{{{SciPy} 1.0: Fundamental Algorithms for Scientific
  Computing in Python}}.
\newblock \bibinfo{journal}{\emph{Nature Methods}}  \bibinfo{volume}{17}
  (\bibinfo{year}{2020}), \bibinfo{pages}{261--272}.
\newblock
\urldef\tempurl%
\url{https://doi.org/10.1038/s41592-019-0686-2}
\showDOI{\tempurl}


\bibitem[Wang et~al\mbox{.}(2019)]%
        {wang2019deep}
\bibfield{author}{\bibinfo{person}{Minjie Wang}, \bibinfo{person}{Lingfan Yu},
  \bibinfo{person}{Da Zheng}, \bibinfo{person}{Quan Gan}, \bibinfo{person}{Yu
  Gai}, \bibinfo{person}{Zihao Ye}, \bibinfo{person}{Mufei Li},
  \bibinfo{person}{Jinjing Zhou}, \bibinfo{person}{Qi Huang},
  \bibinfo{person}{Chao Ma}, {et~al\mbox{.}}} \bibinfo{year}{2019}\natexlab{}.
\newblock \showarticletitle{Deep Graph Library: Towards Efficient and Scalable
  Deep Learning on Graphs.}
\newblock  (\bibinfo{year}{2019}).
\newblock


\bibitem[Wang et~al\mbox{.}(2018)]%
        {wang2018wide}
\bibfield{author}{\bibinfo{person}{Wenqi Wang}, \bibinfo{person}{Yifan Sun},
  \bibinfo{person}{Brian Eriksson}, \bibinfo{person}{Wenlin Wang}, {and}
  \bibinfo{person}{Vaneet Aggarwal}.} \bibinfo{year}{2018}\natexlab{}.
\newblock \showarticletitle{Wide compression: Tensor ring nets}. In
  \bibinfo{booktitle}{\emph{Proceedings of the IEEE Conference on Computer
  Vision and Pattern Recognition}}. \bibinfo{pages}{9329--9338}.
\newblock


\bibitem[Weinberger et~al\mbox{.}(2009)]%
        {weinberger2009feature}
\bibfield{author}{\bibinfo{person}{Kilian Weinberger}, \bibinfo{person}{Anirban
  Dasgupta}, \bibinfo{person}{John Langford}, \bibinfo{person}{Alex Smola},
  {and} \bibinfo{person}{Josh Attenberg}.} \bibinfo{year}{2009}\natexlab{}.
\newblock \showarticletitle{Feature hashing for large scale multitask
  learning}. In \bibinfo{booktitle}{\emph{Proceedings of the 26th annual
  international conference on machine learning}}. \bibinfo{pages}{1113--1120}.
\newblock


\bibitem[Yin et~al\mbox{.}(2021)]%
        {yin2021tt}
\bibfield{author}{\bibinfo{person}{Chunxing Yin}, \bibinfo{person}{Bilge Acun},
  \bibinfo{person}{Carole-Jean Wu}, {and} \bibinfo{person}{Xing Liu}.}
  \bibinfo{year}{2021}\natexlab{}.
\newblock \showarticletitle{TT-rec: Tensor train compression for deep learning
  recommendation models}.
\newblock \bibinfo{journal}{\emph{Proceedings of Machine Learning and Systems}}
   \bibinfo{volume}{3} (\bibinfo{year}{2021}), \bibinfo{pages}{448--462}.
\newblock


\bibitem[Zhao et~al\mbox{.}(2020)]%
        {zhao2020distributed}
\bibfield{author}{\bibinfo{person}{Weijie Zhao}, \bibinfo{person}{Deping Xie},
  \bibinfo{person}{Ronglai Jia}, \bibinfo{person}{Yulei Qian},
  \bibinfo{person}{Ruiquan Ding}, \bibinfo{person}{Mingming Sun}, {and}
  \bibinfo{person}{Ping Li}.} \bibinfo{year}{2020}\natexlab{}.
\newblock \showarticletitle{Distributed hierarchical gpu parameter server for
  massive scale deep learning ads systems}.
\newblock \bibinfo{journal}{\emph{Proceedings of Machine Learning and Systems}}
   \bibinfo{volume}{2} (\bibinfo{year}{2020}), \bibinfo{pages}{412--428}.
\newblock


\bibitem[Zheng et~al\mbox{.}(2020)]%
        {zheng2020distdgl}
\bibfield{author}{\bibinfo{person}{Da Zheng}, \bibinfo{person}{Chao Ma},
  \bibinfo{person}{Minjie Wang}, \bibinfo{person}{Jinjing Zhou},
  \bibinfo{person}{Qidong Su}, \bibinfo{person}{Xiang Song},
  \bibinfo{person}{Quan Gan}, \bibinfo{person}{Zheng Zhang}, {and}
  \bibinfo{person}{George Karypis}.} \bibinfo{year}{2020}\natexlab{}.
\newblock \showarticletitle{Distdgl: distributed graph neural network training
  for billion-scale graphs}. In \bibinfo{booktitle}{\emph{2020 IEEE/ACM 10th
  Workshop on Irregular Applications: Architectures and Algorithms (IA3)}}.
  IEEE, \bibinfo{pages}{36--44}.
\newblock


\bibitem[Zhu and Gupta(2017)]%
        {zhu2017prune}
\bibfield{author}{\bibinfo{person}{Michael Zhu} {and} \bibinfo{person}{Suyog
  Gupta}.} \bibinfo{year}{2017}\natexlab{}.
\newblock \showarticletitle{To prune, or not to prune: exploring the efficacy
  of pruning for model compression}.
\newblock \bibinfo{journal}{\emph{arXiv preprint arXiv:1710.01878}}
  (\bibinfo{year}{2017}).
\newblock


\end{thebibliography}

\end{document}